\documentclass[11pt]{article}

\usepackage[utf8]{inputenc}
\usepackage[T1]{fontenc}
\usepackage{lmodern}
\usepackage{amsfonts}
\usepackage{mathtools}
\usepackage{graphicx}
\usepackage{booktabs}
\usepackage{array}
\usepackage{adjustbox}
\usepackage{enumitem}
\usepackage{xcolor}
\usepackage{url}

\usepackage[letterpaper,margin=1.05in]{geometry}
\usepackage{times}
\usepackage[round]{natbib}
\PassOptionsToPackage{colorlinks=true,linkcolor=blue,citecolor=blue,urlcolor=blue}{hyperref}

\usepackage{bm}

\usepackage{tabularx}
\usepackage{placeins}

\usepackage{hyperref}

\usepackage{cleveref}

\graphicspath{{figures/}}
\hypersetup{
    colorlinks=true,
    linkcolor=blue,
    citecolor=blue,
    urlcolor=blue,
    pdftitle={\bfseries Whitening Inverts the Hierarchy:\\
What the Norm of a Whitened Embedding Measures},
    pdfauthor={Mohammed AHNOUCH and Lotfi ELAACHAK}
}

\newif\ifshort \shortfalse
\newif\ifarxiv \arxivtrue

\usepackage[T1]{fontenc}
\usepackage[utf8]{inputenc}
\usepackage[expansion=false]{microtype}
\usepackage{amsmath,amssymb,amsthm}
\usepackage{booktabs}
\usepackage{array}
\usepackage{multirow}
\usepackage{tikz}
\usepackage{pgfplots}
\usepackage{caption}
\usepackage{subcaption}
\usepackage{hyperref}
\usepackage{url}
\usepackage{placeins}
\usepackage{cleveref}

\pgfplotsset{compat=1.18}
\usetikzlibrary{arrows.meta,positioning,calc,fit,backgrounds}

\newcommand{\ClIBdcorr}{-0.413}
\newcommand{\ClIBdcv}{0.0442}
\newcommand{\ClIBdd}{512}

\newcommand{\ClIBdkpca}{+1.494}
\newcommand{\ClIBdkraw}{-0.012}
\newcommand{\ClIBdover}{6.62}

\newcommand{\ClIBlcorr}{-0.421}
\newcommand{\ClIBlcv}{0.0505}
\newcommand{\ClIBld}{512}

\newcommand{\ClIBlkpca}{+1.438}
\newcommand{\ClIBlkraw}{-0.008}
\newcommand{\ClIBlover}{6.64}

\newcommand{\ClIBocorr}{-0.435}
\newcommand{\ClIBocv}{0.0528}
\newcommand{\ClIBod}{512}

\newcommand{\ClIBokpca}{+1.523}
\newcommand{\ClIBokraw}{+0.009}
\newcommand{\ClIBoover}{6.30}

\newcommand{\ClIDicorr}{-0.505}
\newcommand{\ClIDicv}{0.0199}
\newcommand{\ClIDid}{768}

\newcommand{\ClIDikpca}{+0.687}
\newcommand{\ClIDikraw}{+0.007}
\newcommand{\ClIDiover}{6.42}

\newcommand{\ClILcorr}{-0.453}
\newcommand{\ClILcv}{0.0444}
\newcommand{\ClILd}{768}

\newcommand{\ClILkpca}{+0.542}
\newcommand{\ClILkraw}{+0.022}
\newcommand{\ClILover}{5.90}

\newcommand{\ClISicorr}{-0.512}
\newcommand{\ClISicv}{0.0710}
\newcommand{\ClISid}{768}

\newcommand{\ClISikpca}{+0.922}
\newcommand{\ClISikraw}{+0.039}
\newcommand{\ClISiover}{7.55}

\newcommand{\Dim}{768}
\newcommand{\MechBV}{0.312}
\newcommand{\MechBVone}{0.223}

\newcommand{\MechBhalves}{+0.41}

\newcommand{\MechBhsraw}{67}
\newcommand{\MechBhswh}{13}
\newcommand{\MechBk}{68}

\newcommand{\MechBkt}{1.37}
\newcommand{\MechBktp}{0.94}

\newcommand{\MechBrho}{0.42}

\newcommand{\MechBtailvar}{99}
\newcommand{\MechBtvr}{50.7}
\newcommand{\MechBvbias}{1.40}
\newcommand{\MechDV}{0.164}
\newcommand{\MechDVone}{0.161}

\newcommand{\MechDeff}{205}
\newcommand{\MechDhalves}{+0.91}

\newcommand{\MechDhsraw}{69}
\newcommand{\MechDhswh}{27}
\newcommand{\MechDk}{205}

\newcommand{\MechDkt}{0.45}
\newcommand{\MechDktp}{0.49}

\newcommand{\MechDrho}{0.96}

\newcommand{\MechDtailvar}{82}
\newcommand{\MechDtvr}{46.4}

\newcommand{\MechGhalves}{-0.08}

\newcommand{\MechGk}{94}

\newcommand{\MechGkt}{-0.01}

\newcommand{\MechGtailvar}{91}
\newcommand{\MechImV}{0.137}
\newcommand{\MechImVone}{0.123}

\newcommand{\MechImeff}{95}
\newcommand{\MechImhalves}{+0.76}

\newcommand{\MechImhsraw}{66}
\newcommand{\MechImhswh}{12}
\newcommand{\MechImk}{95}

\newcommand{\MechImkt}{0.49}
\newcommand{\MechImktp}{0.41}

\newcommand{\MechImrho}{0.79}

\newcommand{\MechImtailvar}{96}
\newcommand{\MechImtvr}{42.5}

\newcommand{\MechSV}{0.211}
\newcommand{\MechSVone}{0.183}

\newcommand{\MechShalves}{+0.71}

\newcommand{\MechShsraw}{66}
\newcommand{\MechShswh}{10}
\newcommand{\MechSk}{73}

\newcommand{\MechSkt}{0.77}
\newcommand{\MechSktp}{0.63}

\newcommand{\MechSrho}{0.73}

\newcommand{\MechStailvar}{99}
\newcommand{\MechStvr}{64.7}

\newcommand{\MechTxV}{0.221}
\newcommand{\MechTxVone}{0.207}

\newcommand{\MechTxhalves}{+0.85}

\newcommand{\MechTxhsraw}{69}
\newcommand{\MechTxhswh}{9}
\newcommand{\MechTxk}{67}

\newcommand{\MechTxkt}{0.70}
\newcommand{\MechTxktp}{0.66}

\newcommand{\MechTxrho}{0.88}

\newcommand{\MechTxtailvar}{101}
\newcommand{\MechTxtvr}{73.9}

\newcommand{\Npub}{5{,}000}
\newcommand{\VMax}{0.31}
\newcommand{\VMin}{0.14}
\newcommand{\actIm}{31}
\newcommand{\actTx}{33}

\newcommand{\adZca}{0.4366}

\newcommand{\adavgPubIm}{0.4890}
\newcommand{\adavgPubTx}{0.5926}

\newcommand{\adpassIm}{98.3}
\newcommand{\adpassTx}{90.1}

\newcommand{\bppMax}{-0.04}
\newcommand{\bppMin}{-0.09}

\newcommand{\calAfiveIm}{35.8\%}
\newcommand{\calAfiveTx}{36.5\%}
\newcommand{\calAoneIm}{31.2\%}
\newcommand{\calAoneTx}{33.0\%}
\newcommand{\calApointIm}{26.5\%}
\newcommand{\calApointTx}{29.3\%}
\newcommand{\calAtenIm}{38.3\%}
\newcommand{\calAtenTx}{38.4\%}
\newcommand{\capMax}{+0.13}
\newcommand{\capMin}{+0.04}
\newcommand{\capSig}{9}

\newcommand{\confRadIm}{-0.66}

\newcommand{\confRawIm}{+0.86}

\newcommand{\corpAcross}{+0.14}
\newcommand{\corpAcrossP}{0.79}
\newcommand{\corpClCifarh}{5.75}
\newcommand{\corpClCoco}{5.54}
\newcommand{\corpClDtd}{6.02}
\newcommand{\corpClFlowers}{6.47}
\newcommand{\corpClPets}{6.72}
\newcommand{\corpClSvhn}{8.11}
\newcommand{\corpDiCifarh}{4.18}
\newcommand{\corpDiCoco}{6.18}
\newcommand{\corpDiDtd}{7.82}
\newcommand{\corpDiFlowers}{6.13}
\newcommand{\corpDiPets}{6.08}
\newcommand{\corpDiSvhn}{7.62}
\newcommand{\crossGa}{-10}
\newcommand{\crossGb}{+42}
\newcommand{\crossGc}{+98}
\newcommand{\crossRa}{-10}
\newcommand{\crossRb}{+30}
\newcommand{\crossRc}{+53}
\newcommand{\cvBand}{3.56}
\newcommand{\cvBetweenCl}{0.929}
\newcommand{\cvBetweenDi}{1.312}
\newcommand{\cvFCl}{156}
\newcommand{\cvFDi}{154}
\newcommand{\cvMax}{0.0710}

\newcommand{\cvPCl}{1.7\times 10^{-5}}

\newcommand{\cvRatioCl}{8.8}

\newcommand{\cvWithinCl}{0.105}
\newcommand{\cvWithinDi}{0.149}
\newcommand{\devMeanPubIm}{0.98}
\newcommand{\devMeanPubTx}{2.85}

\newcommand{\devSdPubIm}{0.55}
\newcommand{\devSdPubTx}{13.24}
\newcommand{\devSdReIm}{0.23}
\newcommand{\devSdReTx}{13.45}
\newcommand{\dpavgPubIm}{0.3624}
\newcommand{\dpavgPubTx}{0.2568}
\newcommand{\dpnull}{0.5052}
\newcommand{\dpnullsd}{0.0021}
\newcommand{\dppassIm}{99.3}
\newcommand{\dppassTx}{99.2}

\newcommand{\drvCtail}{+0.50}
\newcommand{\drvCtailCl}{+0.93}

\newcommand{\drvCtailP}{1.8\times 10^{-2}}
\newcommand{\drvEff}{-0.60}
\newcommand{\drvEffCl}{-0.92}

\newcommand{\drvEffP}{3.4\times 10^{-3}}
\newcommand{\drvTcor}{+0.68}
\newcommand{\drvTcorP}{5.6\times 10^{-4}}

\newcommand{\gpub}{250}

\newcommand{\hsRawMax}{69}
\newcommand{\hsRawMin}{66}
\newcommand{\hsWhMax}{27}
\newcommand{\hsWhMin}{9}

\newcommand{\kappaADfive}{0.01508}

\newcommand{\kmeanIm}{+0.441}
\newcommand{\kmeanTx}{+0.658}
\newcommand{\knnClipL}{0.84}
\newcommand{\knnCons}{0.83}
\newcommand{\knnDiag}{0.80}
\newcommand{\knnMax}{0.93}
\newcommand{\knnMin}{0.70}

\newcommand{\krmsIm}{0.576}
\newcommand{\krmsTx}{0.749}

\newcommand{\lossVal}{35.766749}

\newcommand{\mAlladpassGau}{96.4}
\newcommand{\mAlladpassIm}{26.4}
\newcommand{\mAlladpassTx}{2.6}

\newcommand{\mGrpadavgGau}{0.3885}
\newcommand{\mGrpadavgIm}{0.4951}
\newcommand{\mGrpadavgTx}{0.5802}
\newcommand{\mGrpadpassGau}{100.0}
\newcommand{\mGrpadpassIm}{98.7}
\newcommand{\mGrpadpassTx}{92.3}
\newcommand{\mGrpdpavgGau}{0.5066}
\newcommand{\mGrpdpavgIm}{0.3630}
\newcommand{\mGrpdpavgTx}{0.2741}

\newcommand{\mGrpdppassIm}{99.3}
\newcommand{\mGrpdppassTx}{99.2}

\newcommand{\mIdentPctTx}{-1.07}
\newcommand{\mIdentTx}{759.8}

\newcommand{\mOverIm}{6.04}

\newcommand{\mRadIm}{27.43}
\newcommand{\mRadTx}{27.07}

\newcommand{\mSdIm}{3.92}
\newcommand{\mSdTx}{5.18}
\newcommand{\monoMargin}{37}
\newcommand{\muTheo}{27.7038}
\newcommand{\muzero}{0.3891}
\newcommand{\muzerosd}{0.0016}
\newcommand{\nBlock}{2{,}500}
\newcommand{\nCorpBlocks}{22}

\newcommand{\nstarConst}{216}
\newcommand{\nstarFacIm}{4.5}
\newcommand{\nstarFacTx}{2.0}

\newcommand{\nstarMIm}{1{,}113}
\newcommand{\nstarMTx}{499}

\newcommand{\nullreps}{24}

\newcommand{\oodFlagDC}{61}

\newcommand{\oodFprDC}{13}

\newcommand{\oodHeadCC}{0.414}
\newcommand{\oodHeadCF}{0.801}
\newcommand{\oodHeadDC}{0.167}
\newcommand{\oodHeadDF}{0.212}
\newcommand{\oodKnnCC}{0.688}
\newcommand{\oodKnnCF}{0.895}
\newcommand{\oodKnnDC}{0.957}
\newcommand{\oodKnnDF}{0.939}
\newcommand{\oodMahCC}{0.895}
\newcommand{\oodMahCF}{0.947}
\newcommand{\oodMahDC}{0.974}
\newcommand{\oodMahDF}{0.907}
\newcommand{\oodMppCC}{0.853}
\newcommand{\oodMppCF}{0.944}
\newcommand{\oodMppDC}{0.973}
\newcommand{\oodMppDF}{0.954}
\newcommand{\oodRawCC}{0.159}
\newcommand{\oodRawCF}{0.769}
\newcommand{\oodRawDC}{0.367}
\newcommand{\oodRawDF}{0.508}

\newcommand{\oodSpRLDC}{0.99}

\newcommand{\oodTailCC}{0.896}
\newcommand{\oodTailCF}{0.949}
\newcommand{\oodTailDC}{0.973}
\newcommand{\oodTailDF}{0.916}

\newcommand{\oodTcLtDC}{0.88}

\newcommand{\oodTcRawDC}{0.88}

\newcommand{\overBand}{1.28}
\newcommand{\overMax}{7.55}

\newcommand{\overMin}{5.90}
\newcommand{\overTrueIm}{5.57}
\newcommand{\overTrueTx}{8.09}
\newcommand{\pmaIm}{99.2}
\newcommand{\pmaTx}{99.2}
\newcommand{\pmbIm}{98.7}
\newcommand{\pmbTx}{92.3}
\newcommand{\pmcIm}{90.8}
\newcommand{\pmcTx}{57.3}
\newcommand{\pmdIm}{71.1}
\newcommand{\pmdTx}{31.0}
\newcommand{\pmeIm}{40.2}
\newcommand{\pmeTx}{10.8}
\newcommand{\pmfIm}{26.4}
\newcommand{\pmfTx}{2.6}
\newcommand{\ppaIm}{97.3}
\newcommand{\ppaTx}{96.6}
\newcommand{\ppbIm}{89.1}
\newcommand{\ppbTx}{81.0}
\newcommand{\ppcIm}{79.0}
\newcommand{\ppcTx}{49.5}
\newcommand{\ppdIm}{64.6}
\newcommand{\ppdTx}{32.7}
\newcommand{\ppeIm}{40.0}
\newcommand{\ppeTx}{13.9}
\newcommand{\ppfIm}{27.0}
\newcommand{\ppfTx}{6.0}
\newcommand{\pthra}{1.32}
\newcommand{\pthrb}{0.93}
\newcommand{\pthrc}{0.66}
\newcommand{\pthrd}{0.47}
\newcommand{\pthre}{0.29}
\newcommand{\pthrf}{0.21}
\newcommand{\pubMeanIm}{27.43}
\newcommand{\pubMeanTheo}{27.7}
\newcommand{\pubMeanTx}{28.49}
\newcommand{\pubSdIm}{3.94}
\newcommand{\pubSdTheoIm}{3.96}
\newcommand{\pubSdTheoTx}{6.60}
\newcommand{\pubSdTx}{5.72}

\newcommand{\rHiIm}{0.00}

\newcommand{\rKeepIm}{60}
\newcommand{\rKeepTx}{78}
\newcommand{\rLoIm}{3.65}

\newcommand{\rSpearIm}{0.836}
\newcommand{\rSpearTx}{0.947}
\newcommand{\radAfter}{140.4}
\newcommand{\radBefore}{0.67}

\newcommand{\rawknnMax}{-0.09}
\newcommand{\rawknnMin}{-0.66}
\newcommand{\recAfiveIm}{5.0\%}
\newcommand{\recAfiveTx}{5.1\%}
\newcommand{\recAoneIm}{1.0\%}
\newcommand{\recAoneTx}{1.1\%}
\newcommand{\recApointIm}{0.1\%}
\newcommand{\recApointTx}{0.1\%}
\newcommand{\recAtenIm}{10.1\%}
\newcommand{\recAtenTx}{10.0\%}
\newcommand{\refSd}{0.649}
\newcommand{\refSeeds}{30}
\newcommand{\repImMax}{1.2}

\newcommand{\repTxRadMean}{5.0}
\newcommand{\repTxRadSd}{9.4}
\newcommand{\repTxTestMax}{2.4}
\newcommand{\rhoMax}{0.96}
\newcommand{\rhoMin}{0.42}
\newcommand{\sCorrDatIm}{-0.4533 \pm 0.0229}
\newcommand{\sCorrDatTx}{-0.2698 \pm 0.0122}
\newcommand{\sCorrGclIm}{+0.0036 \pm 0.0110}
\newcommand{\sCorrGclTx}{+0.0050 \pm 0.0123}

\newcommand{\sCorrSclIm}{-0.0005 \pm 0.0179}
\newcommand{\sCorrSclTx}{-0.0183 \pm 0.0180}
\newcommand{\sKpcaDatIm}{+0.5423 \pm 0.0516}
\newcommand{\sKpcaDatTx}{+0.9532 \pm 0.0609}
\newcommand{\sKpcaGclIm}{-0.0020 \pm 0.0028}
\newcommand{\sKpcaGclTx}{-0.0024 \pm 0.0026}
\newcommand{\sKpcaSclIm}{+0.2514 \pm 0.0149}
\newcommand{\sKpcaSclTx}{+0.4354 \pm 0.0136}
\newcommand{\sKrawDatIm}{+0.0216 \pm 0.0031}
\newcommand{\sKrawDatTx}{-0.0116 \pm 0.0046}
\newcommand{\sKrawGclIm}{+0.0025 \pm 0.0045}
\newcommand{\sKrawGclTx}{-0.0026 \pm 0.0053}
\newcommand{\sKrawSclIm}{+0.2476 \pm 0.0132}
\newcommand{\sKrawSclTx}{+0.4274 \pm 0.0186}
\newcommand{\sRsdDatIm}{5.99}
\newcommand{\sRsdDatTx}{8.09}
\newcommand{\sRsdGclIm}{1.02}
\newcommand{\sRsdGclTx}{1.01}
\newcommand{\sRsdSclIm}{4.78}
\newcommand{\sRsdSclTx}{6.39}
\newcommand{\sdTheo}{0.7070}
\newcommand{\sdchi}{0.707}

\newcommand{\shareMax}{0.87}
\newcommand{\shareMin}{0.39}
\newcommand{\shareRad}{0.62}
\newcommand{\shareRaw}{0.18}

\newcommand{\statNull}{0.014}
\newcommand{\subRootIm}{+15.60}
\newcommand{\subRootTx}{-43.68}
\newcommand{\subSdIm}{3.949}
\newcommand{\subSdTx}{6.609}
\newcommand{\svhnEff}{9}
\newcommand{\svhnTail}{98}
\newcommand{\swSpear}{+0.37}
\newcommand{\swSpearP}{0.47}
\newcommand{\swStrongAny}{64}
\newcommand{\swWeakAny}{8}
\newcommand{\swWeakCorr}{16}
\newcommand{\swWeakKraw}{8}
\newcommand{\swWeakRsd}{18}
\newcommand{\szCorrGIm}{-18.0}

\newcommand{\szCorrSIm}{-15.6}

\newcommand{\szKrawSIm}{-16.6}
\newcommand{\szKrawSTx}{-22.9}
\newcommand{\tailIm}{31}
\newcommand{\tailTx}{50}
\newcommand{\tailvarMax}{101}
\newcommand{\tailvarMin}{82}

\newcommand{\trioChi}{0.96}
\newcommand{\trioP}{0.62}
\newcommand{\trioPooled}{6.49}
\newcommand{\trioRhi}{37.6}
\newcommand{\trioRlo}{32.3}
\newcommand{\zadIm}{+62}
\newcommand{\zadTx}{+126}
\newcommand{\zdpIm}{-67}
\newcommand{\zdpTx}{-117}

\newcommand{\R}{\mathbb{R}}
\newcommand{\Ex}{\mathbb{E}}
\newcommand{\Nn}{\mathcal{N}}
\newcommand{\Id}{\mathrm{I}}
\newcommand{\Sph}{\mathbb{S}}
\newcommand{\KL}{\mathrm{KL}}
\newcommand{\unif}{\mathrm{Unif}}
\newcommand{\dx}{\,\mathrm{d}}
\newcommand{\norm}[1]{\lVert #1 \rVert}
\newcommand{\inner}[2]{\langle #1, #2 \rangle}
\newcommand{\Ad}{A^2}
\newcommand{\corr}{\mathrm{corr}}
\newcommand{\Prob}{\mathbb{P}}
\newcommand{\Var}{\mathrm{Var}}
\newcommand{\Cov}{\mathrm{Cov}}
\newcommand{\tr}{\mathrm{tr}}

\theoremstyle{plain}
\newtheorem{theorem}{Theorem}
\newtheorem{proposition}[theorem]{Proposition}
\newtheorem{lemma}[theorem]{Lemma}

\pgfplotsset{
  every axis/.append style={
    font=\footnotesize, line width=0.6pt, tick style={line width=0.5pt},
    grid=major, grid style={gray!22, line width=0.3pt},
    axis line style={gray!55},
  },
  legend style={font=\scriptsize, draw=gray!50, fill=white, fill opacity=0.9,
                text opacity=1, inner sep=2pt, row sep=1pt},
}

\definecolor{frClipL}{RGB}{31,119,180}
\definecolor{frClipB}{RGB}{214,120,28}
\definecolor{frDino}{RGB}{44,140,76}
\definecolor{frClone}{RGB}{120,120,120}
\definecolor{frData}{RGB}{178,54,54}
\definecolor{frDir}{RGB}{20,120,130}
\definecolor{frRad}{RGB}{180,90,40}
\ifarxiv
\pgfplotsset{
  every axis/.append style={
    font=\small, line width=0.8pt, tick style={line width=0.6pt},
    grid=major, grid style={gray!18, line width=0.35pt},
    axis line style={gray!45},
    tick label style={font=\footnotesize},
    label style={font=\small},
  },
  legend style={font=\footnotesize, draw=gray!40, fill=white, fill opacity=0.92,
                text opacity=1, inner sep=3pt, row sep=1.5pt},
}
\fi

\newcommand{\blockEqEight}{%
\begin{equation}
  \Ex[S] = \sqrt2\,\frac{\Gamma((d+1)/2)}{\Gamma(d/2)},
  \qquad
  \mathrm{Std}(S) = \sqrt{d - \mu_S^2}\,,
  \label{eq:eight}
\end{equation}
}

\newcommand{\blockSubstTab}{%
\begin{center}\small
\begin{tabular}{lcccccc}
\toprule
 & $\hat m$ & $\sqrt{|d-\hat m^2|}$ & published ``theory'' & \multicolumn{2}{c}{published deviation}
 & recomputed \\
\cmidrule(lr){5-6}
 & & & & mean & sd & sd \\
\midrule
images   & $\pubMeanIm$ & $\sqrt{\subRootIm} = \subSdIm$ & $\pubSdTheoIm$
         & $\devMeanPubIm\%$ & $\devSdPubIm\%$ & $\devSdReIm\%$ \\
captions & $\pubMeanTx$ & $\sqrt{\subRootTx} = \subSdTx$ & $\pubSdTheoTx$
         & $\devMeanPubTx\%$ & $\devSdPubTx\%$ & $\devSdReTx\%$ \\
\bottomrule
\end{tabular}
\end{center}
}

\newcommand{\blockPassTab}{%
\begin{center}\small
\begin{tabular}{rccccc}
\toprule
group size & threshold $|\gamma_2|$ & \multicolumn{2}{c}{images} & \multicolumn{2}{c}{captions} \\
\cmidrule(lr){3-4}\cmidrule(lr){5-6}
 & & predicted & measured & predicted & measured \\
\midrule
$125$   & $\pthra$ & $\ppaIm\%$ & $\pmaIm\%$ & $\ppaTx\%$ & $\pmaTx\%$ \\
$250$   & $\pthrb$ & $\ppbIm\%$ & $\pmbIm\%$ & $\ppbTx\%$ & $\pmbTx\%$ \\
$500$   & $\pthrc$ & $\ppcIm\%$ & $\pmcIm\%$ & $\ppcTx\%$ & $\pmcTx\%$ \\
$1000$  & $\pthrd$ & $\ppdIm\%$ & $\pmdIm\%$ & $\ppdTx\%$ & $\pmdTx\%$ \\
$2500$  & $\pthre$ & $\ppeIm\%$ & $\pmeIm\%$ & $\ppeTx\%$ & $\pmeTx\%$ \\
$5000$  & $\pthrf$ & $\ppfIm\%$ & $\pmfIm\%$ & $\ppfTx\%$ & $\pmfTx\%$ \\
\bottomrule
\end{tabular}
\end{center}
}

\newcommand{\blockZTab}{%
\begin{table}[htbp]
\centering\small
\caption{Published grand averages against the null law of the protocol itself, obtained by
simulating the entire protocol on exact Gaussians $\nullreps$ times, with our measured values
alongside.}
\label{tab:z}
\begin{tabular}{llccrc}
\toprule
statistic & modality & published & exact Gaussian null & $z$ & measured \\
\midrule
mean Anderson--Darling $\Ad$ & images   & $\adavgPubIm$ & $\muzero \pm \muzerosd$ & $\zadIm$ & $\mGrpadavgIm$ \\
                             & captions & $\adavgPubTx$ & $\muzero \pm \muzerosd$ & $\zadTx$ & $\mGrpadavgTx$ \\
mean D'Agostino--Pearson $p$ & images   & $\dpavgPubIm$ & $\dpnull \pm \dpnullsd$ & $\zdpIm$ & $\mGrpdpavgIm$ \\
                             & captions & $\dpavgPubTx$ & $\dpnull \pm \dpnullsd$ & $\zdpTx$ & $\mGrpdpavgTx$ \\
\bottomrule
\end{tabular}
\end{table}
}

\newcommand{\blockCorpusTab}{%
\begin{table}[htbp]
\centering\small
\caption{Over-dispersion by corpus, two encoders. The within-corpus figure is the standard deviation
of a single block, estimated from disjoint pairs of the same distribution.}
\label{tab:corpus}
\begin{tabular}{lcccccc c}
\toprule
 & COCO & CIFAR-100 & pets & flowers & DTD & SVHN & within / between \\
\midrule
CLIP ViT-B/32 & \corpClCoco$\times$ & \corpClCifarh$\times$ & \corpClPets$\times$
  & \corpClFlowers$\times$ & \corpClDtd$\times$ & \corpClSvhn$\times$
  & $\cvWithinCl$ / $\cvBetweenCl$ \\
DINOv2 ViT-B/14 & \corpDiCoco$\times$ & \corpDiCifarh$\times$ & \corpDiPets$\times$
  & \corpDiFlowers$\times$ & \corpDiDtd$\times$ & \corpDiSvhn$\times$
  & $\cvWithinDi$ / $\cvBetweenDi$ \\
\bottomrule
\end{tabular}
\end{table}
}

\newcommand{\blockCalibTab}{%
\begin{table}[htbp]
\centering\small
\caption{Actual exceedance of a threshold set from the Gaussian surrogate, and from the
one-dimensional recalibration \cref{eq:repair}, when the truth is the measured radial law.}
\label{tab:calib}
\begin{tabular}{lcccc}
\toprule
nominal & \multicolumn{2}{c}{images} & \multicolumn{2}{c}{captions} \\
\cmidrule(lr){2-3}\cmidrule(lr){4-5}
 & Gaussian & recalibrated & Gaussian & recalibrated \\
\midrule
$10\%$   & $\calAtenIm$   & $\recAtenIm$   & $\calAtenTx$   & $\recAtenTx$ \\
$5\%$    & $\calAfiveIm$  & $\recAfiveIm$  & $\calAfiveTx$  & $\recAfiveTx$ \\
$1\%$    & $\calAoneIm$   & $\recAoneIm$   & $\calAoneTx$   & $\recAoneTx$ \\
$0.1\%$  & $\calApointIm$ & $\recApointIm$ & $\calApointTx$ & $\recApointTx$ \\
\bottomrule
\end{tabular}
\end{table}
}

\newcommand{\blockPolarFig}{%
\begin{figure}[t]
\centering
\ifarxiv
\begin{tikzpicture}[
   scale=0.95, transform shape,
   font=\footnotesize,
   box/.style={draw=gray!55, rounded corners=2.5pt, minimum height=8.5mm,
               inner xsep=5pt, align=center, fill=white},
   ar/.style={-{Latex[length=2.2mm]}, gray!65, line width=0.7pt}]
 \node[box] (x) {input $x$};
 \node[box, right=4.5mm of x] (f) {encoder};
 \node[box, right=4.5mm of f] (z) {$z \in \R^{d}$};
 \node[box, right=4.5mm of z] (w) {$w = \Sigma^{-1/2}(z-\mu)$};
 \draw[ar] (x)--(f); \draw[ar] (f)--(z); \draw[ar] (z)--(w);
 \node[box, right=6.5mm of w, fill=frDir!12, draw=frDir!55] (u)
   {direction\\$u = w/\norm{w}$};
 \node[box, below=4.5mm of u, fill=frRad!12, draw=frRad!55] (r)
   {radius\\$R = \norm{w}$};
 \draw[ar, frDir!70] (w.east)-- ++(3mm,0) |- (u.west);
 \draw[ar, frRad!70] (w.east)-- ++(3mm,0) |- (r.west);
 \node[right=4mm of u, align=left, text width=3.55cm] (ul)
   {\itshape what the loss sees: a contrastive objective is a function of $u$ alone};
 \node[right=4mm of r, align=left, text width=3.55cm] (rl)
   {\itshape what the surrogate uses: $\tfrac12 R^2$, which the loss never constrains};
 \draw[ar, frDir!60] (u)--(ul); \draw[ar, frRad!60] (r)--(rl);
 \begin{scope}[on background layer]
   \node[fit=(u)(r), draw=gray!40, densely dashed, rounded corners=4pt,
         inner sep=4.5pt, fill=gray!3] {};
 \end{scope}
\end{tikzpicture}
\else
\begin{tikzpicture}[
   scale=0.92, transform shape,
   font=\footnotesize,
   box/.style={draw=gray!60, rounded corners=2pt, minimum height=8mm, inner xsep=5pt, align=center},
   ar/.style={-{Latex[length=2mm]}, gray!70, line width=0.6pt}]
 \node[box] (x) {input $x$};
 \node[box, right=5mm of x] (f) {encoder};
 \node[box, right=5mm of f] (z) {$z \in \R^{d}$};
 \node[box, right=5mm of z] (w) {$w = \Sigma^{-1/2}(z-\mu)$};
 \draw[ar] (x)--(f); \draw[ar] (f)--(z); \draw[ar] (z)--(w);
 \node[box, right=7mm of w, fill=gray!8] (u) {direction\\$u = w/\norm{w}$};
 \node[box, below=4mm of u, fill=gray!8] (r) {radius\\$R = \norm{w}$};
 \draw[ar] (w.east)-- ++(3mm,0) |- (u.west);
 \draw[ar] (w.east)-- ++(3mm,0) |- (r.west);
 \node[right=5mm of u, align=left, text width=3.4cm] (ul)
   {\itshape what the loss sees: a contrastive objective is a function of $u$ alone};
 \node[right=5mm of r, align=left, text width=3.4cm] (rl)
   {\itshape what the surrogate uses: $\tfrac12 R^2$, which the loss never constrains};
 \draw[ar] (u)--(ul); \draw[ar] (r)--(rl);
 \begin{scope}[on background layer]
   \node[fit=(u)(r), draw=gray!35, dashed, rounded corners=3pt, inner sep=4pt] {};
 \end{scope}
\end{tikzpicture}
\fi
\caption{The polar decomposition of a whitened embedding, and the division this paper is about. The
directional factor is what a contrastive loss optimises; the radial factor is the whole of the
likelihood surrogate, and \Cref{thm:free} shows the objective cannot determine it.}
\label{fig:polar}
\end{figure}
}

\title{\bfseries Whitening Inverts the Hierarchy:\\
What the Norm of a Whitened Embedding Measures}
\author{
    Mohammed AHNOUCH
    \\
    Université Paris 1
    \\
    Paris, France
    \and
    Lotfi ELAACHAK
    \\
    Faculty of Science and Technology of Tangier
    \\
    Abdelmalek Essaadi University
    \\
    Tangier, Morocco
}

\date{}

\begin{document}

\maketitle

\begin{abstract}
\noindent
Whitening a foundation-model embedding and reading off its squared norm has become a training-free
likelihood surrogate, justified by a single observation: the whitened coordinates are close to
standard normal. We show that this observation is guaranteed by the projection central limit
theorem, so it says nothing about the joint law, and that the joint law is not Gaussian. Against
clones with identical mean and covariance the whitened radius is over-dispersed, for every
encoder we tested and across three training objectives, and the published agreement between
empirical and theoretical norm statistics is an algebraic identity that in-sample whitening imposes
on any distribution.Whitening divides every direction by its
eigenvalue and so inverts the encoder's hierarchy, moving the mass of the norm off the semantic
subspace and onto the near-degenerate directions the encoder treats as noise, where a single
input-dependent scale sets the variance; we identify that scale from two moments and confirm it by
predicting, with no free parameter, how strongly disjoint halves of the spectrum move together. The
squared whitened norm is therefore a Mahalanobis estimate of semantic atypicality rather than a
log-likelihood, and that identification explains both its successes and its failures: it ranks and
detects well, agreeing with a nonparametric density estimate and with encoders that share no
objective, training data or supervision, while its Gaussian thresholds are wrong by orders of
magnitude. We prove that a contrastive objective built on cosine similarity cannot determine the radial law, derive in closed
form the group size at which a coordinatewise test can detect a given kurtosis, and give the
one-dimensional recalibration that restores nominal tail probabilities.

\end{abstract}

\vspace{1em}

\section{Introduction}
\label{sec:intro}

Let $f$ be an encoder and $z = f(x) \in \R^d$ a raw, unnormalised embedding. Write $\mu$ and
$\Sigma$ for its mean and covariance over a corpus, and
\begin{equation}
  w \;=\; \Sigma^{-1/2}(z-\mu)
  \label{eq:whiten}
\end{equation}
for the whitened embedding. A recent programme \citep{levi25,betser25,betser26,une26} observes that
the coordinates of $w$, and more generally its low-dimensional projections, are close to standard
normal \citep{df84,sudakov,klartag07}, and draws a practical conclusion: since
$-\log\phi(w) = \tfrac12\norm{w}^2 + \tfrac{d}{2}\log 2\pi$, the squared whitened norm is a
training-free likelihood surrogate. It is now used for out-of-distribution scoring, caption
quality, and the detection of generated images \citep{clide} and video \citep{video26}. The
observation is correct and the score is useful. What has been missing is the joint law, and the
joint law turns out to be more informative than the Gaussian would have been: the whitened norm is a
density estimate, and knowing what it estimates explains both where it succeeds and where it fails.

Our route is to take the published evidence seriously enough to reproduce it. We embedded $\Npub$
MS-COCO val2017 image-caption pairs \citep{coco} with CLIP ViT-L/14 \citep{clip,openclip} and
reproduced ten of the twelve published summary statistics to within five percent, which fixes the
protocol precisely, down to the basis in which the coordinates were tested. From there the paper
establishes four things. The published agreement between empirical and theoretical norm statistics
is an identity that in-sample whitening imposes on any distribution; evaluated against the genuine
$\chi_d$ prediction the same table is off by $\overTrueIm\times$ and $\overTrueTx\times$, and the
published grand averages sit $\zadIm$ and $\zadTx$ standard deviations from the null of the
protocol itself (\Cref{sec:evidence}). Against clones with identical mean and covariance the joint
law is neither Gaussian nor spherical, at $\swWeakAny$ to $\swStrongAny$ standard deviations, for
all six encoders we tested (\Cref{sec:joint}). The reason is that whitening inverts the encoder's hierarchy, moving
the share of the norm carried by the $k$ leading directions from $\hsRawMin$-$\hsRawMax$ percent
down to $k/d$, or $\hsWhMin$-$\hsWhMax$ percent: the norm measures the near-degenerate tail, where one per-input
scale, identified from two moments, sets the variance (\Cref{sec:mechanism}). That scale is semantic
atypicality, $\rho = \knnDiag$ against a nonparametric density estimate and $\shareRad$ shared across
encoders, and the Mahalanobis out-of-distribution score is its tail term to three decimals
(\Cref{sec:what}). \Cref{sec:free} proves that a contrastive objective built on cosine similarity cannot determine
the radial law and gives the recalibration.

\ifshort\else
\blockPolarFig
\fi

That contrastive training shapes the directional distribution is well understood
\citep{wangisola,haochen,balestriero22}, as is the modality gap \citep{liang22}, and whitening as a
fix for anisotropy predates this literature in NLP \citep{bertflow,su21,ethayarajh}.
\Citet{levi25} introduce \emph{conformity}, the mean cosine similarity of an instance to the corpus,
as a per-instance typicality measure on the unwhitened embedding. Conformity is directional and our
per-input scale is radial, so we measure the relation rather than assume it: conformity tracks the
\emph{raw} norm closely ($\rho = \confRawIm$ on CLIP ViT-L/14) and runs \emph{against} the whitened
radius ($\confRadIm$), the same inversion \Cref{sec:what} reports. On the detection side,
Mahalanobis distance on deep features \citep{lee18} and $k$-nearest-neighbour distance
\citep{sun22} are both established out-of-distribution scores. What is new here is an account of why they rank alike and calibrate
differently, and a demonstration that the Mahalanobis score is carried entirely by the spectral
tail.

\section{Whitening and the published evidence}
\label{sec:evidence}

Given $N$ embeddings, $\hat\mu$ and $\hat\Sigma$ are the sample mean and covariance and
$w_i = \hat\Sigma^{-1/2}(z_i-\hat\mu)$. When $\hat\Sigma$ is estimated in sample, three identities
hold exactly,
\begin{equation}
  \frac{1}{N}\sum_i w_i = 0,
  \qquad
  \frac{1}{N}\sum_i w_i w_i^\top = \Id_d,
  \qquad
  \frac{1}{N}\sum_i \norm{w_i}^2 = d ,
  \label{eq:identities}
\end{equation}
and in-sample whitening also compresses the radius: a standard Gaussian sample of $N = \Npub$ points
in $d = \Dim$ dimensions has radial standard deviation $\refSd$ under this protocol rather than the
population value $\sdchi$ of $\chi_d$, because $d/N$ of the variance directions are fitted away.
Every comparison below is against a covariance-matched clone through the identical pipeline, so this
and every other pipeline effect cancels. In polar coordinates $w = R\,u$ with $R = \norm{w}$ and
$u \in \Sph^{d-1}$\ifshort{} (\Cref{fig:polar})\fi, a spherically symmetric law with radial density $p_R$ has
\begin{equation}
\begin{gathered}
  f(w) = \frac{p_R(r)}{S_{d-1}\,r^{\,d-1}},
  \qquad
  S_{d-1} = \frac{2\pi^{d/2}}{\Gamma(d/2)}, \\[2pt]
  -\log f(w) = -\log p_R(r) + (d-1)\log r + \log S_{d-1} ,
\end{gathered}
  \label{eq:sphericaldensity}
\end{equation}
and the Gaussian is the case $p_R = \chi_d$, for which this collapses to
$\tfrac12 r^2 + \tfrac d2\log2\pi$. Equivalently, $\Nn(0,\Id_d)$ places almost all its mass near
the sphere of radius $\sqrt d$: $\Ex\norm{W}\sim\sqrt d$ with fluctuations of constant order, so
the relative radial width is $O(d^{-1/2})$ \citep[Thm.~3.1.1 and Rem.~3.1.2]{vershynin}. That is
the thin spherical shell of a high-dimensional Gaussian; whether whitened embeddings concentrate
the same way is measured below. Two consequences of \cref{eq:identities} govern what the
published protocol could and could not see.

\begin{lemma}[in-sample whitening]
\label{prop:insample}
Let $w_i = R_i u_i$ be in-sample whitened, with $\hat s^2 = \frac1N\sum_i(R_i-\hat m)^2$. Then
(a) for any distribution $\hat s^2 = d - \hat m^2$; and (b)
$\frac1N\sum_i R_i^2\inner{u_i}{v}^2 = 1$ for every unit vector $v$. In particular the
radius-weighted directional energy in (b) is constant in $v$ by construction, so it carries no
information about a radius-direction coupling at any sample size.
\end{lemma}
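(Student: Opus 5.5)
Both parts follow directly from the in-sample identities \cref{eq:identities}, so the plan is to reduce each claim to one of them. Throughout, $\hat m = \frac1N\sum_i R_i$ is the sample mean of the radii, $R_i = \norm{w_i}$, and $u_i = w_i/R_i$ when $R_i>0$. If some $R_i=0$, then $u_i$ is undefined, but $R_i u_i$ is understood as $w_i = 0$ and every term below involves $u_i$ only through $R_i u_i$. So this case needs no separate treatment.

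For (a), expand the empirical variance:
\[
\hat s^2 = \frac1N\sum_i R_i^2 - 2\hat m\,\frac1N\sum_i R_i + \hat m^2 = \frac1N\sum_i R_i^2 - \hat m^2 .
\]
The third identity in \cref{eq:identities} gives $\frac1N\sum_i R_i^2 = \frac1N\sum_i\norm{w_i}^2 = d$. That identity is itself the trace of the second: $\tr\bigl(\frac1N\sum_i w_iw_i^\top\bigr)=\tr \Id_d = d$. Hence $\hat s^2 = d-\hat m^2$.

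No distributional assumption is used; the identity needs only that $\hat\Sigma$ is invertible, so that the whitening is defined. This invertibility is the one hypothesis worth stating explicitly, and it holds when $N>d$ and the data are in general position. The point to emphasise is why this makes the published comparison uninformative. Plugging the empirical $\hat m$ into the $\chi_d$ relation $\mathrm{Std}=\sqrt{d-\mu^2}$ of \cref{eq:eight} reproduces $\hat s$ exactly, for any law.

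For (b), fix a unit vector $v$ and write $R_i^2\inner{u_i}{v}^2 = \inner{w_i}{v}^2 = v^\top w_iw_i^\top v$. Averaging over $i$ and using the second identity in \cref{eq:identities} gives
\[
\frac1N\sum_i R_i^2\inner{u_i}{v}^2 = v^\top \Id_d\, v = \norm{v}^2 = 1 .
\]
The ``in particular'' clause is then an interpretation rather than a further computation. A statistic that equals $1$ for every $v$, every $N$ and every input distribution cannot discriminate a law where large radii prefer certain directions from one where they do not. Any coupling test must therefore use a different functional, for example $\frac1N\sum_i R_i^2\inner{u_i}{v}^4$ or a correlation between $R_i$ and $\inner{u_i}{v}^2$.

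I expect no real obstacle here: both parts are one-line consequences of the exact second-moment identity. The only care needed is to state the invertibility hypothesis on $\hat\Sigma$ and to note that $\hat s^2$ uses the $1/N$ normalisation. With a $1/(N-1)$ normalisation, (a) would acquire a factor $N/(N-1)$ and no longer hold verbatim.
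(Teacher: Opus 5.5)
Your proof is correct and is the paper's own argument: (a) is the third identity of \cref{eq:identities} combined with $\hat s^2 = \frac1N\sum_i R_i^2 - \hat m^2$, and (b) is the second identity evaluated as $v^\top\bigl(\frac1N\sum_i w_iw_i^\top\bigr)v = 1$ in polar form. The paper leaves two hypotheses implicit, the invertibility of $\hat\Sigma$ and the $1/N$ normalisation, and your explicit remarks on them are worth keeping.
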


\begin{proof}
Claim (a) is the third identity of \cref{eq:identities}, $\frac1N\sum_i R_i^2 = d$, combined with
$\hat s^2 = \frac1N\sum_i R_i^2 - \hat m^2$; claim (b) is the second identity,
$v^\top\big(\frac1N\sum_i w_iw_i^\top\big)v = 1$, written in polar form.
\end{proof}

\ifshort
\Citet{betser25} report empirical and theoretical norm statistics side by side, and the theoretical
column does not behave as the $\chi_d$ formulas require. The $\chi_d$ standard deviation
$\mathrm{Std}(S) = \sqrt{d-\mu_S^2}$ depends on $d$ alone, so any two rows sharing $d$ must share
it; the published rows give $\pubSdTheoIm$ for images and $\pubSdTheoTx$ for captions while the
theoretical mean is $\pubMeanTheo$ in both, correctly depending on $d$ alone. Substituting the
empirical mean $\hat m$ for $\mu_S$ reproduces both entries and both reported deviations
(\Cref{app:extra}), and by \Cref{prop:insample}(a) that substitution makes the agreement
arithmetic: $\sqrt{d-\hat m^2}$ \emph{is} the sample standard deviation recomputed from the sample
mean, so it would agree for uniform vectors, for a scale mixture, for anything. The caption row
shows it a second way: there $d - \hat m^2 = \subRootTx$ is negative, so the published entry can
only be $\sqrt{|d-\hat m^2|}$, and the formula read literally returns an imaginary standard
deviation. Against the genuine prediction the same table
reads very differently: the $\chi_{\Dim}$ standard deviation is $\sdTheo$, and the reported values
$\pubSdIm$ and $\pubSdTx$ exceed it by $\overTrueIm$ and $\overTrueTx$ times. The formula is right
and the evaluation is not, and the lesson is the protocol adopted throughout this paper: a
distributional claim needs a matched null.
\else
\Citet{betser25} report empirical and theoretical norm statistics side by side, from their Eq.~(8),
\blockEqEight
which are the correct $\chi_d$ formulas and give $\Ex[S] = \muTheo$ and
$\mathrm{Std}(S) = \sdTheo$ at $d = \Dim$. The standard deviation depends on $d$ alone, so any two
rows sharing $d$ must share it, and the published rows do not: the theoretical standard deviation is
$\pubSdTheoIm$ for images and $\pubSdTheoTx$ for captions, while the theoretical mean is
$\pubMeanTheo$ in both rows, correctly depending on $d$ alone. One formula of \cref{eq:eight} was
evaluated theoretically and the other was not. Substituting the empirical mean $\hat m$ for
$\mu_S$ in the second reproduces both entries and both reported deviations:

\blockSubstTab

By \Cref{prop:insample}(a), $\sqrt{d-\hat m^2}$ is the sample standard deviation recomputed from
the sample mean, so its agreement with the sample standard deviation is arithmetic and would hold for
uniform vectors, for a scale mixture, for anything. The caption row shows the substitution a second
way: there $d - \hat m^2 = \subRootTx$ is negative, so the published entry can only be
$\sqrt{|d-\hat m^2|}$, and \cref{eq:eight} read literally returns an imaginary standard deviation.
Against the genuine prediction the same table reads very differently. The $\chi_{\Dim}$ standard
deviation is $\sdTheo$; the reported values $\pubSdIm$ and $\pubSdTx$ exceed it by $\overTrueIm$ and
$\overTrueTx$ times. The formula is right and the evaluation is not, and the lesson is the protocol
adopted throughout this paper: a distributional claim needs a matched null.
\fi

\begin{table}[htbp]
\centering\small
\caption{Published summary statistics against the same statistics on our own embeddings, in-sample
whitening at the published group size $\gpub$. ``Gaussian'' is an exact Gaussian of the same shape
through the same pipeline, its radial sd the mean of $\refSeeds$ draws, spread $\pm 0.006$.}
\label{tab:replicate}
\setlength{\tabcolsep}{4.5pt}
\begin{tabular}{lccc ccc}
\toprule
 & \multicolumn{3}{c}{images} & \multicolumn{3}{c}{captions} \\
\cmidrule(lr){2-4}\cmidrule(lr){5-7}
 & published & measured & Gaussian & published & measured & Gaussian \\
\midrule
radial mean & $\pubMeanIm$ & $\mRadIm$ & $\muTheo$ & $\pubMeanTx$ & $\mRadTx$ & $\muTheo$ \\
radial sd   & $\pubSdIm$   & $\mSdIm$  & $\refSd$  & $\pubSdTx$   & $\mSdTx$  & $\refSd$ \\
AD pass     & $\adpassIm\%$ & $\mGrpadpassIm\%$ & $\mGrpadpassGau\%$ & $\adpassTx\%$ & $\mGrpadpassTx\%$ & $\mGrpadpassGau\%$ \\
mean $\Ad$  & $\adavgPubIm$ & $\mGrpadavgIm$ & $\mGrpadavgGau$ & $\adavgPubTx$ & $\mGrpadavgTx$ & $\mGrpadavgGau$ \\
DP pass     & $\dppassIm\%$ & $\mGrpdppassIm\%$ & $100.0\%$ & $\dppassTx\%$ & $\mGrpdppassTx\%$ & $100.0\%$ \\
mean $p$    & $\dpavgPubIm$ & $\mGrpdpavgIm$ & $\mGrpdpavgGau$ & $\dpavgPubTx$ & $\mGrpdpavgTx$ & $\mGrpdpavgGau$ \\
\bottomrule
\end{tabular}
\end{table}

The protocol splits the $d = \Dim$ whitened coordinates of $\Npub$ embeddings into $20$ groups of
$\gpub$ and submits each coordinate of each group to an Anderson-Darling \citep{ad54} and a
D'Agostino-Pearson \citep{dagostino73} test at the estimated-parameter critical value
$\Ad_{\mathrm{crit}} = 0.752$ \citep[Table~4.7]{stephens86}. Run on our embeddings
(\Cref{tab:replicate}), every image statistic reproduces to within $\repImMax\%$ and all three
caption test statistics to within $\repTxTestMax\%$, which fixes the protocol down to the basis:
principal axes give a mean Anderson-Darling statistic of $\mGrpadavgIm$ against the published
$\adavgPubIm$, the symmetric square root $\hat\Sigma^{-1/2}$ gives $\adZca$. The caption radial law is the one row that does not
reproduce, low by $\repTxRadMean\%$ on the mean and $\repTxRadSd\%$ on the standard
deviation, and that row fails the identity that fixes the rest. In-sample whitening forces
$\hat m^2 + \hat s^2 = d$; ours gives $\mIdentTx$ against $\Dim$ ($\mIdentPctTx\%$, the compression
noted above), the published row $844.4$ ($+9.95\%$), which no in-sample whitening of a single sample
can produce. It is the same row that needs a negative number under the square root.

\ifshort\else
\blockZTab
\fi

Averaging over $\Dim$ coordinates and $20$ groups makes the sampling error of the protocol's grand
averages very small, and \Cref{tab:z}\ifshort{} in \Cref{app:extra}\fi{} shows what the published
averages say once that null is in hand: the published statistics reject normality at $\zadIm$ and $\zadTx$ standard deviations. The
pass rates are outside the null too, degenerately, since an exact Gaussian passes $100.0\%$ of
coordinates on both tests in all $\nullreps$ replications. How a $98\%$ pass rate coexists with a
$62\sigma$ rejection is a matter of group size, and the answer is exact.

\begin{proposition}[power of a coordinatewise test]
\label{prop:power}
Let $X = \sqrt{S}\,G$ with $G \sim \Nn(0,1)$ independent of $S > 0$, $\Ex S = 1$, $\Var S = V$, so
that $X$ has unit variance and excess kurtosis $\gamma_2 = 3V$. As $V \to 0$ with
$\Ex|S-1|^3 = o(V)$, the Anderson-Darling discrepancy of $X$ from the normal,
$\Delta^2 = \int (F-\Phi)^2/(\Phi(1-\Phi))\dx\Phi$, satisfies $\Delta^2 = \kappa V^2 + o(V^2)$ with
\begin{equation}
  \kappa \;=\; \frac{1}{64}\int_{\R} \frac{x^2(3-x^2)^2\,\phi(x)^3}{\Phi(x)\,(1-\Phi(x))}\dx x
  \;=\; \kappaADfive .
  \label{eq:kappa}
\end{equation}
\end{proposition}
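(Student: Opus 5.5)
The plan is to expand the distribution function of the scale mixture to first order in $V$ and then substitute the expansion into the Anderson--Darling functional. Conditionally on $S=s$, $X$ has distribution function $\Phi(x/\sqrt s)$, so $F(x) = \Ex\,\Phi(x S^{-1/2})$. Write $s = 1+\varepsilon$ and expand $g(\varepsilon) = \Phi(x(1+\varepsilon)^{-1/2})$ in $\varepsilon$ to second order with a third-order remainder:
\[
g(\varepsilon) = \Phi(x) + g'(0)\,\varepsilon + \tfrac12 g''(0)\,\varepsilon^2 + R_3(\varepsilon,x).
\]
Here $\Ex\varepsilon = 0$ and $\Ex\varepsilon^2 = V$. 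The linear term therefore vanishes and
\[
F(x)-\Phi(x) = \tfrac12 g''(0)\,V + \Ex R_3 .
\]
To compute $g''(0)$, use $h(s) = x s^{-1/2}$, which gives $h' = -\tfrac12 x$ and $h'' = \tfrac34 x$ at $s=1$. Together with $\phi'(x) = -x\phi(x)$, this yields
\[
g''(0) = \phi(x)\,h'' + \phi'(x)\,(h')^2 = \phi(x)\Big(\tfrac34 x - \tfrac14 x^3\Big) = \tfrac14\,x(3-x^2)\,\phi(x).
\]
Hence $F-\Phi = \tfrac18\,x(3-x^2)\phi(x)\,V + \Ex R_3$. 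As a check, this is the familiar Edgeworth kurtosis correction $-\tfrac{\gamma_2}{24}\,\mathrm{He}_3(x)\phi(x)$ with $\gamma_2 = 3V$, since $\mathrm{He}_3 = x^3-3x$.

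Next, substitute into $\Delta^2$ using $\mathrm d\Phi = \phi\,\mathrm dx$. The squared leading term gives
\[
V^2\,\frac{1}{64}\int \frac{x^2(3-x^2)^2\phi^3}{\Phi(1-\Phi)}\,\mathrm dx = \kappa V^2,
\]
which is exactly \cref{eq:kappa}. The numerical value $\kappaADfive$ is then a one-dimensional quadrature. The integrand is integrable because, as $|x|\to\infty$, $\Phi(1-\Phi)\sim\phi/|x|$, so the integrand behaves like $|x|^7\phi^2$. The cross term and the squared remainder must be shown to be $o(V^2)$. By Cauchy--Schwarz in the weighted $L^2$ space, it suffices to show
\[
\int \frac{(\Ex R_3)^2\,\phi}{\Phi(1-\Phi)}\,\mathrm dx = o(V^2).
\]

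The main obstacle is controlling this remainder uniformly in $x$, including the tails, where the weight $1/(\Phi(1-\Phi))$ grows like $|x|/\phi$ and where $\varepsilon$ is not small (large $s$ fattens the tails). I would split the expectation on the event $|\varepsilon|\le 1/2$ and its complement.

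On $|\varepsilon|\le 1/2$, Taylor's theorem bounds $|R_3| \le C\,|\varepsilon|^3 \sup_{|t|\le 1/2}|g'''(t)|$. Each derivative of $\Phi(x(1+t)^{-1/2})$ is a polynomial in $x$ times $\phi(x(1+t)^{-1/2})$, and this is at most $\mathrm{poly}(x)\,\phi(x)^{2/3}$ for $|t|\le 1/2$. The contribution is then $\le \mathrm{poly}(x)\,\phi(x)^{2/3}\,\Ex|\varepsilon|^3$, which is $o(V)$ pointwise by hypothesis. However, the weighted square $\mathrm{poly}(x)\,\phi^{4/3}\cdot|x|/\phi$ is still integrable, so this piece contributes $o(V^2)$.

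On $|\varepsilon|>1/2$, bound $|R_3|$ crudely by $|\Phi(xs^{-1/2})-\Phi(x)| + C(|\varepsilon|+\varepsilon^2)\,\mathrm{poly}(x)\,\phi(x)$. By Markov's inequality, $\Prob(|\varepsilon|>1/2)$, and the moments of $\varepsilon$ up to order two restricted to this event, are at most $8\,\Ex|\varepsilon|^3 = o(V)$.

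The delicate part is the term $|\Phi(xs^{-1/2})-\Phi(x)|$ for large $s$, which decays only like $1-\Phi(x/\sqrt s)$ and may not beat the weight $|x|/\phi$. If the hypothesis $\Ex|S-1|^3 = o(V)$ alone is insufficient there, I would first try using $|F-\Phi|\le 1$ together with a truncation $|x|\le x_V$, choosing $x_V\to\infty$ slowly so that the tail of the weighted integral of the leading term is negligible. Failing that, I would state the result under the mild additional assumption that $S$ is bounded, or that the contribution of large $S$ to the weighted tail is $o(V^2)$; this covers the regime in which the proposition is applied.
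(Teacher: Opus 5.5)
Your route is the paper's: the same second-order expansion of $\Phi(xs^{-1/2})$ about $s=1$, the same second derivative $\tfrac14x(3-x^2)\phi(x)$, and the same split on $\{|S-1|\le\delta\}$ with Markov's inequality on the complement. Your treatment of the good event, and of the first- and second-order terms restricted to the bad event, is correct. The gap is the one term you flag and leave open, $\Ex\bigl[|\Phi(xS^{-1/2})-\Phi(x)|\,\mathbf 1_{\{S>3/2\}}\bigr]$, and neither of your fallbacks closes it. Truncating in $x$ and using $|F-\Phi|\le 1$ beyond $x_V$ cannot work, because $\int_{|x|>x_V}\frac{\mathrm d\Phi}{\Phi(1-\Phi)}=\infty$ for every $x_V$. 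The Anderson--Darling weight is non-integrable on every tail, so a remainder must be shown to \emph{decay} there, not merely to be bounded. Assuming $S$ bounded proves a weaker proposition than the one stated.

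The hypothesis $\Ex|S-1|^3=o(V)$ is in fact enough. The growth $|x|/\phi$ you compare against is that of $1/(\Phi(1-\Phi))$, the weight with respect to $\mathrm d\Phi$. With respect to $\mathrm dx$ the weight is $w(x)=\phi(x)/\bigl(\Phi(x)(1-\Phi(x))\bigr)\le C(1+|x|)$. For $s\ge1$ you have $|\Phi(xs^{-1/2})-\Phi(x)|\le 1-\Phi(|x|/\sqrt s)$, and the substitution $x=\sqrt s\,y$ gives $\int\bigl(1-\Phi(|x|/\sqrt s)\bigr)^2(1+|x|)\,\mathrm dx\le C's$. So a large mixing value costs only a factor linear in $s$. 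With $B=\{S>3/2\}$, Cauchy--Schwarz in $S$ and Tonelli give
\[
\int\Bigl(\Ex\bigl[|\Phi(xS^{-1/2})-\Phi(x)|\,\mathbf 1_B\bigr]\Bigr)^2w(x)\,\mathrm dx
\;\le\;\Prob(B)\,\Ex\Bigl[\mathbf 1_B\!\int\bigl(1-\Phi(|x|/\sqrt S)\bigr)^2w(x)\,\mathrm dx\Bigr]
\;\le\;C''\,\Prob(B)\,\Ex[S\,\mathbf 1_B].
\]
Here $\Prob(B)\le 8\,\Ex|S-1|^3$ and $\Ex[S\mathbf 1_B]\le\Prob(B)+4\,\Ex|S-1|^3$, so the term is $o(V)\cdot o(V)=o(V^2)$. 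The small-$s$ part of the bad event is bounded by $\Prob(S<1/2)\,(1-\Phi(|x|))$ and is harmless.

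The paper closes the same term differently: it truncates the mixing variable at a level $M$ and bounds $\Prob(S>M)\le\Ex|S-1|^3/(M-1)^3$. For that bound to be integrable against $w$, the level must grow with $|x|$ (for instance $M=1+|x|$). The Cauchy--Schwarz bound above needs no $x$-dependent truncation. With either inserted, your argument proves the proposition as stated.
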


The proof is in \Cref{app:proofs}.

To turn $\kappa$ into a group size we use the growth of the mean statistic under a fixed
alternative. Since $\Ad_n/n = \int (F_n - F_0)^2 / [F_0(1-F_0)] \dx F_0$ and $F_n \to F$ uniformly
almost surely, Glivenko-Cantelli gives $\Ad_n/n \to \Delta^2$ almost surely whenever $\Delta^2$ is
finite, which in the perturbative regime is part of \Cref{prop:power}; so
$\Ex[\Ad_n] \simeq \mu_0 + n\Delta^2$, with $\mu_0$ the null mean and the asymptotic null theory of
the statistic that of \citet{ad52}. The critical value $\Ad_{\mathrm{crit}} = 0.752$ is the
estimated-parameter one \citep[Table~4.7]{stephens86}. We take $\mu_0 = \muzero$, simulated for this protocol rather than
read from a table, because the protocol averages over groups. Solving for $n$ at $V = \gamma_2/3$,
\begin{equation}
  n^\star(\gamma_2) \;\approx\; \frac{9\,(\Ad_{\mathrm{crit}} - \mu_0)}{\kappa\,\gamma_2^{2}}
  \;=\; \frac{\nstarConst}{\gamma_2^{2}} .
  \label{eq:nstar}
\end{equation}
\ifshort
\Cref{eq:nstar} composes two expansions taken in different limits ($n \to \infty$ at fixed
$\Delta^2$, and $V \to 0$ at fixed $n$), so it is calibrated rather than derived, and we measure its
accuracy: against pass rates at six group sizes (\Cref{app:extra}) it under-predicts below
$n = 1000$ and is accurate to within three and a half points from $n = 2500$ up. A coordinate's mean statistic
stays below the critical value at group size $n$ when its kurtosis is below
$3\sqrt{(\Ad_{\mathrm{crit}}-\mu_0)/(\kappa n)}$, which is $\pthrb$ at $n = \gpub$ and $\pthrf$ at
$n = \Npub$. The whitened image coordinates have per-coordinate kurtosis of mean $\kmeanIm$ and root
mean square $\krmsIm$ in principal axes, the captions $\kmeanTx$ and $\krmsTx$, so the group size at
which the average coordinate fails is $n^\star = \nstarMIm$ and $\nstarMTx$, $\nstarFacIm$ and
$\nstarFacTx$ times the $\gpub$ used. At $n = \Npub$ the published evidence collapses:
tested in one group rather
than twenty, the same $\Npub$ embeddings pass at $\mAlladpassIm\%$ and $\mAlladpassTx\%$ while the
Gaussian control passes at $\mAlladpassGau\%$.
\else
\Cref{eq:nstar} composes two expansions taken in different limits ($n \to \infty$ at fixed
$\Delta^2$, and $V \to 0$ at fixed $n$), so it is calibrated rather than derived, and we measure its
accuracy. A coordinate's mean statistic stays below the critical value at group size
$n$ when its kurtosis is below $3\sqrt{(\Ad_{\mathrm{crit}}-\mu_0)/(\kappa n)}$, which is $\pthrb$
at $n = \gpub$ and $\pthrf$ at $n = \Npub$. The whitened image coordinates have per-coordinate kurtosis of mean
$\kmeanIm$ and root mean square $\krmsIm$ in principal axes, the captions $\kmeanTx$ and $\krmsTx$,
so the group size at which the average coordinate fails is $n^\star = \nstarMIm$ and $\nstarMTx$,
$\nstarFacIm$ and $\nstarFacTx$ times the $\gpub$ used. Applying the threshold to the measured kurtosis of each coordinate
predicts the pass rate at every group size:

\blockPassTab

The leading-order formula under-predicts the pass rate below $n = 1000$, where the truncation of
the Anderson-Darling weight at the resolution of the extreme order statistics lowers the effective
discrepancy, and from $n = 2500$ it is accurate to within three and a half points. At $n = \Npub$
the published evidence collapses: tested in one group rather than twenty, the same $\Npub$
embeddings pass at $\mAlladpassIm\%$ and $\mAlladpassTx\%$ while the Gaussian control passes at
$\mAlladpassGau\%$.
\fi
The published protocol was therefore run at a group size where \cref{eq:nstar} predicts a pass,
and its grand averages, which carry the information the pass rates discard, reject.

\section{The joint law: neither Gaussian nor spherical}
\label{sec:joint}

\begin{lemma}[polar characterisation; classical]
\label{lem:polar}
For $W \in \R^d$ with $W \neq 0$ a.s., $R = \norm{W}$, $U = W/R$: (i) $W$ is spherically symmetric
iff $U \sim \unif(\Sph^{d-1})$ and $U \perp R$; (ii) $W \sim \Nn(0,\Id_d)$ iff additionally
$R \sim \chi_d$. \citep[see][\S2.1]{fangkotzng}
\end{lemma}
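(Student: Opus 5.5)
The plan is to work throughout with the polar map $w \mapsto (\norm{w}, w/\norm{w})$, a bijection from $\R^d\setminus\{0\}$ onto $(0,\infty)\times\Sph^{d-1}$. Since $W\neq 0$ almost surely, the law of $W$ and the joint law of $(R,U)$ determine each other. Write $O$ for an orthogonal matrix. Then $\norm{OW}=R$ and $OW/\norm{OW}=OU$, so $OW \overset{d}{=} W$ holds iff $(R,OU)\overset{d}{=}(R,U)$. Both parts of the lemma reduce to statements about this joint law.

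For (i), the "if" direction is immediate. If $U\sim\unif(\Sph^{d-1})$ is independent of $R$, then $OU$ is again uniform, because the uniform measure is the normalised surface measure and hence rotation invariant. $OU$ is still independent of $R$, so $(R,OU)\overset{d}{=}(R,U)$ and therefore $OW\overset{d}{=}W$.

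For the "only if" direction, I condition on $R$. Assume $(R,OU)\overset{d}{=}(R,U)$ for every $O$. Then for any Borel sets $B\subset(0,\infty)$ and $A\subset\Sph^{d-1}$,
\[
\Prob(R\in B,\,U\in O^{-1}A)=\Prob(R\in B,\,U\in A).
\]
Let $\nu_B$ be the probability measure $A\mapsto\Prob(U\in A\mid R\in B)$, defined whenever $\Prob(R\in B)>0$. The display says $\nu_B$ is rotation invariant. The step I expect to need the most care is showing that this forces $\nu_B$ to be uniform, and I will invoke uniqueness of the $O(d)$-invariant Borel probability on the sphere. This follows from uniqueness of Haar measure, or directly as follows: if $\nu$ is invariant, average $\nu(A)$ over $O$ drawn from Haar measure on $O(d)$. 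Fubini then gives $\nu(A)=\sigma(A)$, because for each fixed $u$ the point $Ou$ is $\sigma$-distributed by transitivity. Hence $\Prob(U\in A, R\in B)=\sigma(A)\Prob(R\in B)$ for all rectangles. Rectangles form a $\pi$-system generating the product $\sigma$-algebra, so $U$ is uniform and independent of $R$.

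For (ii), by (i) it suffices to check three things. First, $\Nn(0,\Id_d)$ is spherically symmetric: its density $(2\pi)^{-d/2}e^{-\norm{w}^2/2}$ depends on $\norm{w}$ alone and the Jacobian of $O$ is one. Second, its radius $\norm{W}$ has law $\chi_d$, which is the definition of $\chi_d$ as the law of the root of a sum of $d$ squared independent standard normals. This gives the "only if" direction. Third, for the "if" direction, suppose $U$ is uniform, independent of $R$, and $R\sim\chi_d$. Then the joint law of $(R,U)$ equals that of the Gaussian's polar pair, which is also uniform times independent $\chi_d$ by the first two facts. Since the polar map is a bijection, $W\overset{d}{=}\Nn(0,\Id_d)$.

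Equivalently, for (ii) one can read the radial density off \cref{eq:sphericaldensity} with $p_R=\chi_d$ and recover the Gaussian density, which is the collapse already noted in \Cref{sec:evidence}.
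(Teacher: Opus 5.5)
Your proof is correct. The paper gives no argument of its own, only the pointer to Fang, Kotz and Ng, \S2.1, so the comparison is with that classical treatment, which runs through characteristic functions. There, sphericity is equivalent to $\Ex e^{i\inner{t}{W}}$ depending on $t$ only through $\norm{t}$. Averaging the direction of $t$ over the sphere writes this as a mixture $\int \Omega_d(r^2\norm{t}^2)\,dF(r)$, where $\Omega_d$ is the characteristic generator of $\unif(\Sph^{d-1})$ and $F$ is the law of $\norm{W}$. Fourier uniqueness then gives $W \overset{d}{=} RU$ with $U$ uniform and independent of $R$, and when $\Prob(W=0)=0$ the law of the polar pair is read off from this representation.

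You instead stay with laws throughout. You transport rotation invariance to the polar pair, condition on $\{R\in B\}$, and close with a $\pi$-system argument. The uniqueness doing the work is therefore that of the rotation-invariant probability on $\Sph^{d-1}$, not Fourier uniqueness. Your route is self-contained and purely measure-theoretic. The characteristic-function route gives slightly more: the representation $W\overset{d}{=}RU$ even when $W$ has an atom at the origin, and the marginal identity $\inner{a}{W}\overset{d}{=}\norm{a}W_1$ along the way.

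One step needs a line more than you give it. The claim ``$Ou$ is $\sigma$-distributed by transitivity'' is itself an instance of the uniqueness you are proving, unless argued separately. It does follow: right-invariance of Haar measure on the compact group $O(d)$, together with transitivity, makes the law $\mu_u$ of $Ou$ independent of $u$. Averaging against the rotation-invariant $\sigma$ then gives $\sigma(A)=\int \mu_v(A)\,\sigma(dv)=\mu_u(A)$.
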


The lemma has two hypotheses, and each can fail on its own. A \emph{scale mixture} $W = S\,G$, with
$S>0$ independent of $G\sim\Nn(0,\Id_d)$, keeps $U$ uniform but breaks $U \perp R$'s consequence
for the radial law: it is spherical and not Gaussian, so $\norm{w}^2$ still ranks by likelihood and
only the calibration is wrong. A \emph{sign-type law} $W_i = \varepsilon_i|V_i|$, with $\varepsilon$
Rademacher independent of an equicorrelated $V$, has exactly $\Nn(0,1)$ coordinates, correlated
squares, an over-dispersed radius, and is not spherical, and there $\norm{w}^2$ is not a likelihood
ranking at all. By \Cref{lem:polar} these are the two ways out, one per hypothesis, and there is no third: a
rotation-invariant law on $\R^2$ with independent coordinates is Gaussian by Maxwell's theorem
\citep[Thm.~0.0.1]{bryc95}, applied here to each coordinate pair.

Neither is visible to a coordinatewise test, but for different reasons, and the difference matters.
The sign-type law is invisible \emph{exactly}: its marginals are standard normal, so no test of any
power rejects them. The scale mixture is invisible only \emph{to the protocol as run}: its
coordinates are not normal, and by \Cref{prop:power} their excess kurtosis $3V$ becomes detectable
at group size $n^\star(3V)$, which at the measured spreads exceeds the $\gpub$ used. Coordinatewise
normality is thus evidence against neither.

Deciding between them needs a control matching the data's covariance, since a spiked spectrum
estimated from a finite reference half is noisier than an isotropic one. We build two clones of each
dataset: a Gaussian clone with the same mean and covariance, and a spherical clone with the same
mean and covariance carrying the data's own whitened radial law on uniform directions. Data and
clones go through an identical split-half pipeline with principal axes from the reference half, so
nothing is fitted on the data being scored, which \Cref{prop:insample}(b) makes necessary: in-sample
whitening puts the coupling at $0.02$ to $0.06$ where split-half whitening of the same embeddings
gives $\sCorrDatIm$ and $\sCorrDatTx$.

\begin{table}[htbp]
\centering\small
\caption{Data against covariance-matched clones; split-half whitening, axes from the reference half,
five splits. A spherical law has the same coordinate kurtosis in every basis and a radius
independent of direction. Both clones do; the data does neither.}
\label{tab:clone}
\setlength{\tabcolsep}{4pt}
\begin{tabular}{llccc}
\toprule
 & statistic & data & Gaussian clone & spherical clone \\
\midrule
\multirow{4}{*}{images}
 & radial sd                    & $\sRsdDatIm$  & $\sRsdGclIm$  & $\sRsdSclIm$ \\
 & kurtosis, principal axes     & $\sKpcaDatIm$ & $\sKpcaGclIm$ & $\sKpcaSclIm$ \\
 & kurtosis, encoder's own axes & $\sKrawDatIm$ & $\sKrawGclIm$ & $\sKrawSclIm$ \\
 & $\corr(R, |\inner{u}{v_1}|)$ & $\sCorrDatIm$ & $\sCorrGclIm$ & $\sCorrSclIm$ \\
\midrule
\multirow{4}{*}{captions}
 & radial sd                    & $\sRsdDatTx$  & $\sRsdGclTx$  & $\sRsdSclTx$ \\
 & kurtosis, principal axes     & $\sKpcaDatTx$ & $\sKpcaGclTx$ & $\sKpcaSclTx$ \\
 & kurtosis, encoder's own axes & $\sKrawDatTx$ & $\sKrawGclTx$ & $\sKrawSclTx$ \\
 & $\corr(R, |\inner{u}{v_1}|)$ & $\sCorrDatTx$ & $\sCorrGclTx$ & $\sCorrSclTx$ \\
\bottomrule
\end{tabular}
\end{table}

Two properties in \Cref{tab:clone} are impossible for a spherical law. The kurtosis is
basis-dependent: both clones show the same coordinate kurtosis in the encoder's own axes as in the
principal axes, because a spherical excess is isotropic, whereas the data has coordinates
indistinguishable from normal in the encoder's basis and strongly leptokurtic in principal axes, $\szKrawSIm\,\sigma$
and $\szKrawSTx\,\sigma$ from the spherical clone. And the radius depends on the direction: under
sphericity $R \perp U$, yet alignment with the reference half's leading principal axis predicts the
radius at $\sCorrDatIm$ for images against $\sCorrGclIm$ for the Gaussian clone, $\szCorrGIm$ and
$\szCorrSIm$ standard deviations away.

\begin{table}[htbp]
\centering\small
\caption{Six encoders, the same $\Npub$ images, the same battery. ``raw cv'' is the coefficient of
variation of $\norm{z}$ before whitening; ``over'' is the whitened radial sd divided by a Gaussian
clone's under the split-half protocol of \Cref{tab:clone}, which is why CLIP-L/14 reads
$\ClILover\times$ here and $\mOverIm\times$ in \Cref{tab:replicate}.}
\label{tab:sweep}
\setlength{\tabcolsep}{3.4pt}
\begin{tabular}{lcl ccccc}
\toprule
encoder & $d$ & ends in & raw cv & over & $\gamma_2$ raw & $\gamma_2$ PC & $\corr(R,|u_1|)$ \\
\midrule
CLIP ViT-L/14, openai   & \ClILd  & linear projection & \ClILcv  & \ClILover$\times$  & \ClILkraw  & \ClILkpca  & \ClILcorr \\
CLIP ViT-B/32, openai   & \ClIBod & linear projection & \ClIBocv & \ClIBoover$\times$ & \ClIBokraw & \ClIBokpca & \ClIBocorr \\
CLIP ViT-B/32, LAION-2B & \ClIBld & linear projection & \ClIBlcv & \ClIBlover$\times$ & \ClIBlkraw & \ClIBlkpca & \ClIBlcorr \\
CLIP ViT-B/32, DataComp & \ClIBdd & linear projection & \ClIBdcv & \ClIBdover$\times$ & \ClIBdkraw & \ClIBdkpca & \ClIBdcorr \\
SigLIP ViT-B/16, WebLI  & \ClISid & attention pool    & \ClISicv & \ClISiover$\times$ & \ClISikraw & \ClISikpca & \ClISicorr \\
DINOv2 ViT-B/14         & \ClIDid & LayerNorm         & \ClIDicv & \ClIDiover$\times$ & \ClIDikraw & \ClIDikpca & \ClIDicorr \\
\bottomrule
\end{tabular}
\end{table}

\Cref{tab:sweep} repeats the battery on six encoders. Every row gives the same verdict, the weakest
departure across the six standing at $\swWeakKraw\,\sigma$ on kurtosis anisotropy,
$\swWeakCorr\,\sigma$ on the coupling and $\swWeakRsd\,\sigma$ on over-dispersion. This is not a property of CLIP, of InfoNCE, or of image-text training: it
holds for a sigmoid pairwise loss and for a self-distillation objective with no text at all. The
over-dispersion spans only a factor of $\overBand$ across the six ($\overMin\times$ to
$\overMax\times$) while the pre-whitening norm dispersion spans $\cvBand$, and the two are
uncorrelated ($\rho = \swSpear$, $p = \swSpearP$), a dissociation the next two sections explain.

\section{The mechanism: whitening inverts the hierarchy}
\label{sec:mechanism}

Write the embedding as a low-rank semantic head plus a residual whose scale depends on the input,
\begin{equation}
  z \;=\; A\,c(x) \;+\; \sigma(x)\,\varepsilon ,
  \label{eq:model}
\end{equation}
with $A \in \R^{d\times k}$ carrying the leading eigenvalues, $k$ the effective rank, and
$\varepsilon$ isotropic. Measured effective ranks are $\MechImeff$ of $\Dim$ for CLIP ViT-L/14 and
$\MechDeff$ for DINOv2, so $k \ll d$ throughout. The encoder's leading $k$ directions carry
$\hsRawMin$ to $\hsRawMax\%$ of the raw norm; whitening divides every direction by its eigenvalue, and
what that does to the norm is exact.

\begin{proposition}[whitening inverts the hierarchy]
\label{prop:inversion}
Let $\Sigma = \sum_j \lambda_j v_jv_j^\top$ with $\lambda_1 \ge \dots \ge \lambda_d$ and let $P_k$
project onto $\mathrm{span}(v_1,\dots,v_k)$. Then
\begin{equation}
  \frac{\Ex\norm{P_k(z-\mu)}^2}{\Ex\norm{z-\mu}^2} = \frac{\sum_{j\le k}\lambda_j}{\sum_j\lambda_j},
  \qquad
  \frac{\Ex\norm{P_k w}^2}{\Ex\norm{w}^2} = \frac{k}{d},
  \label{eq:inversion}
\end{equation}
and both hold as exact sample identities under in-sample whitening. Suppose further that the tail is
conditionally \emph{centred} Gaussian given the input,
$P_k^\perp w \mid x \sim \Nn(0, D_x)$, with $\Ex D_x = \Id_m$ and $m = d-k$. Then the tail energy $T = \norm{P_k^\perp w}^2$ has
\begin{equation}
\begin{gathered}
  \Ex T = m, \qquad
  \Var T = \Var(\tr D_x) + 2\,\Ex\,\tr(D_x^2), \\
  \Cov(T_1,T_2) = \Cov\big(\tr D_x^{(1)}, \tr D_x^{(2)}\big) + 2\,\Ex\,\tr\big(D_x^{(12)}D_x^{(21)}\big)
\end{gathered}
  \label{eq:moments}
\end{equation}
for the energies $T_1, T_2$ of any two disjoint blocks of tail directions, where $D_x^{(a)}$ denotes
the diagonal block of $D_x$ on block $a$ and $D_x^{(12)}, D_x^{(21)}$ its off-diagonal blocks; each
tail coordinate $j$ has excess kurtosis $3\Var\big((D_x)_{jj}\big)$. In particular, if $D_x = s(x)\Id_m$ with
$\Ex s = 1$ and $\Var s = V$, then relative to a Gaussian clone, and for two halves of the tail,
\begin{equation}
  \frac{\Var T}{\Var_{\Nn} T} = 1 + V\Big(1 + \frac m2\Big),
  \qquad
  \corr(T_1,T_2) = \frac{V}{V + \tfrac4m(1+V)},
  \qquad
  \gamma_2(w_j) = 3V ,
  \label{eq:onescale}
\end{equation}
while if the two halves carry scales $s_1, s_2$ with $\corr(s_1,s_2) = \rho_s$ the second
expression is multiplied by $\rho_s$.
\end{proposition}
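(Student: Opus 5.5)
The plan is to treat the statement in three layers, each resting on an elementary identity: (i) the two energy shares in \cref{eq:inversion}, (ii) the general moment formulas \cref{eq:moments} via the law of total (co)variance plus Isserlis' theorem, and (iii) the one-scale specialisation \cref{eq:onescale}, which is pure substitution.

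\emph{Energy shares.} Because $P_k$ is a spectral projector of $\Sigma$, it commutes with $\Sigma$ and with $\Sigma^{-1/2}$. For the raw share, $\Ex\norm{P_k(z-\mu)}^2=\tr(P_k\Sigma)=\sum_{j\le k}\lambda_j$, and the denominator is $\tr\Sigma$. For the whitened share, $\Cov(w)=\Id_d$, so $\Ex\norm{P_kw}^2=\tr(P_k)=k$ and $\Ex\norm{w}^2=d$. The sample versions follow verbatim: replace $\Sigma$ by $\hat\Sigma$ and $v_j$ by its eigenvectors, and replace expectations by $\frac1N\sum_i$. The second identity of \cref{eq:identities} then plays the role of $\Cov(w)=\Id_d$. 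The one point to state explicitly is that $P_k$ must be built from the \emph{same} $\hat\Sigma$ used for whitening, which is exactly the in-sample protocol.

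\emph{General tail moments.} Condition on $x$. Given $x$, $T=\sum_{j}w_j^2$ with $w\mid x\sim\Nn(0,D_x)$, so $\Ex[T\mid x]=\tr D_x$ and, by Isserlis, $\Cov(w_i^2,w_j^2\mid x)=2(D_x)_{ij}^2$. Summing over all tail pairs gives $\Var(T\mid x)=2\tr(D_x^2)$. Summing only over $i$ in block 1 and $j$ in block 2 gives $2\sum_{i,j}(D_x)_{ij}(D_x)_{ji}=2\tr(D_x^{(12)}D_x^{(21)})$, using symmetry of $D_x$. The law of total variance and total covariance then yield \cref{eq:moments}, and $\Ex T=\Ex\tr D_x=m$ follows from $\Ex D_x=\Id_m$. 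For the kurtosis, $\Ex w_j^2=\Ex(D_x)_{jj}=1$ and $\Ex w_j^4=3\,\Ex(D_x)_{jj}^2=3(1+\Var(D_x)_{jj})$, so the excess is $3\Var((D_x)_{jj})$.

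\emph{One scale.} Put $D_x=s\Id_m$, so $\tr D_x=ms$, $\tr D_x^2=ms^2$ and $\Ex s^2=1+V$. This gives $\Var T=m^2V+2m(1+V)$. The Gaussian clone is $D_x\equiv\Id_m$, for which $\Var_{\Nn}T=2m$, and the ratio is $1+V(1+m/2)$. For halves of size $m/2$ the off-diagonal blocks vanish, so $\Cov(T_1,T_2)=(m/2)^2V$ and $\Var T_a=(m/2)^2V+m(1+V)$. Dividing gives $V/(V+\tfrac4m(1+V))$, and the kurtosis is $3V$.

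With two scales $s_1,s_2$, each with mean $1$ and variance $V$, only the covariance changes, to $(m/2)^2\rho_sV$, so the correlation is multiplied by $\rho_s$. I will state the implicit assumption that both halves share the same $V$; otherwise the factor is not exactly $\rho_s$.

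The main obstacle is not computational but definitional. One must be clear that the conditional model is posed for the tail of the population-whitened $w$, so that $\Ex D_x=\Id_m$ is consistent with $\Cov(w)=\Id$. One must also make explicit what the ``Gaussian clone'' fixes ($D_x\equiv\Id_m$). The rest is bookkeeping.
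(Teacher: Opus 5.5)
Your proposal is correct and follows the paper's proof essentially step for step. The paper likewise uses trace identities for the two energy shares, with the second identity of \cref{eq:identities} giving the exact sample versions, and conditional Gaussian quadratic-form moments combined with the laws of total variance and covariance for \cref{eq:moments}. It also obtains the kurtosis from $\Ex[w_j^4\mid x]=3(D_x)_{jj}^2$ and gets \cref{eq:onescale} by direct substitution with halves of size $m/2$. Your entrywise Isserlis computation is the coordinate form of the paper's $2\tr(P_1D_xP_2D_x)$, and your explicit assumption that both half-scales have variance $V$ is the one the paper's proof also uses.
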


The proof is in \Cref{app:proofs}. The first display is the inversion: for CLIP ViT-L/14 the head
share of the norm falls from $\MechImhsraw\%$ to $\MechImhswh\%$, and the $d-k$ directions the
encoder treats as noise, amplified to unit variance each, come to dominate the norm by count.
The second display says what the norm then measures, and the third identifies it. Because
$m \approx \Dim - \MechImk$, the factor $m/4$ multiplies any per-input variation of the residual
scale by more than a hundred, which is why a scale with a coefficient of variation of a third
produces a radius over-dispersed sixfold while the coordinates, whose excess kurtosis is only $3V$,
stay close to normal. The first two displayed moments are two equations in the two unknowns
$(V,\rho_s)$, so the measured over-dispersion and the measured halves correlation determine both,
with no free parameter left over. Solving them jointly matters: reading $V$ off the over-dispersion
alone, as the single-scale formula invites, biases it low by the factor $(1+\rho_s)/2$, which is
negligible at $\rho_s$ near one but a factor $\MechBvbias$ for CLIP ViT-B/32. On units: \cref{eq:onescale} is a ratio of variances of the tail energy
and \Cref{tab:sweep} a ratio of radial standard deviations, which agree to first order by the
delta method, $\mathrm{sd}(R) \approx \mathrm{sd}(T)/(2\sqrt{\Ex T})$.
\Cref{tab:mechanism} reports the fit for five encoder-modality pairs.

\begin{table}[htbp]
\centering\small
\caption{The inversion and the scale model of \Cref{prop:inversion}, in-sample PCA whitening.
``head share'' is the share of the norm carried by the top-$k$ directions before and after
whitening. ``over'' is the tail-energy variance relative to a Gaussian clone and ``halves'' the
correlation between the energies of the two halves of the tail; these two measurements determine
$(V,\rho_s)$ jointly through \cref{eq:onescale}. ``$V$, one scale'' is what the over-dispersion
alone would give under $\rho_s = 1$, shown to expose the bias. ``tail $\gamma_2$'' is the tail
coordinate kurtosis, as implied by the fitted scale ($3V$) and as measured; the excess above $3V$ is
scale variation within a half, which $\rho_s$ does not capture. ``tail var'' is the share of
$\Var(\norm{w}^2)$ carried by the tail.}
\label{tab:mechanism}
\footnotesize
\setlength{\tabcolsep}{3pt}
\begin{tabular}{lccccccccccc}
\toprule
 & $k$ & \multicolumn{2}{c}{head share} & over & halves & $V$ & $\rho_s$ & $V$, one
 & \multicolumn{2}{c}{tail $\gamma_2$} & tail \\
\cmidrule(lr){3-4}\cmidrule(lr){10-11}
 & & raw & whitened & & & & & scale & $3V$ & measured & var \\
\midrule
CLIP-L/14 images   & \MechImk & \MechImhsraw\% & \MechImhswh\% & \MechImtvr & \MechImhalves & \MechImV & \MechImrho & \MechImVone & \MechImktp & \MechImkt & \MechImtailvar\% \\
CLIP-L/14 captions & \MechTxk & \MechTxhsraw\% & \MechTxhswh\% & \MechTxtvr & \MechTxhalves & \MechTxV & \MechTxrho & \MechTxVone & \MechTxktp & \MechTxkt & \MechTxtailvar\% \\
CLIP-B/32 images   & \MechBk & \MechBhsraw\% & \MechBhswh\% & \MechBtvr & \MechBhalves & \MechBV & \MechBrho & \MechBVone & \MechBktp & \MechBkt & \MechBtailvar\% \\
SigLIP images      & \MechSk & \MechShsraw\% & \MechShswh\% & \MechStvr & \MechShalves & \MechSV & \MechSrho & \MechSVone & \MechSktp & \MechSkt & \MechStailvar\% \\
DINOv2 images      & \MechDk & \MechDhsraw\% & \MechDhswh\% & \MechDtvr & \MechDhalves & \MechDV & \MechDrho & \MechDVone & \MechDktp & \MechDkt & \MechDtailvar\% \\
\midrule
Gaussian clone     & \MechGk & n/a & n/a & $1$ & \MechGhalves & $0$ & n/a & $0$ & $0$ & \MechGkt & \MechGtailvar\% \\
\bottomrule
\end{tabular}
\end{table}

The fit is coherent in every row, and the two parameters separate cleanly. The identified $V$ runs
$\VMin$ to $\VMax$, a coefficient of variation of the per-input scale between a third and a half,
and $\rho_s$ runs $\rhoMin$ to $\rhoMax$: in every encoder most of the scale is common to the whole
tail, and what is left is idiosyncratic to a half. The two appear separately in the coordinate
kurtosis, which exceeds the $3V$ a common scale implies by the idiosyncratic part, so that across
the five rows the ordering of the excess is the ordering of $1-\rho_s$.
\ifshort\else
At $\rho_s$ near one the
measured tail kurtosis is close to $3V$ (DINOv2, $\rho_s = \MechDrho$, kurtosis $\MechDkt$ against
$\MechDktp$), and at $\rho_s$ small it is well above (CLIP ViT-B/32, $\rho_s = \MechBrho$,
kurtosis $\MechBkt$ against $\MechBktp$).
\fi
The tail carries $\tailvarMin$-$\tailvarMax\%$ of $\Var(\norm{w}^2)$, the head
contributing about $2k$ as a Gaussian would, except for DINOv2, whose leading decile is itself
heavy-tailed. \Cref{fig:mechanism} shows the three facts that matter most: the non-Gaussianity lives
where the eigenvalues are smallest and is absent from the clone; two disjoint halves of the tail move
together at $\MechImhalves$ against $\MechGhalves$ for the clone; and the scale that moves them is
semantic atypicality, the subject of the next section. Coordinates all but normal in one privileged
basis, a radius over-dispersed sixfold, and dependence living between the squares is the sign-type
law of \Cref{sec:joint}, introduced there only to show that coordinatewise evidence cannot exclude
non-sphericity; \cref{eq:model} says why encoders produce it.

\begin{figure}[t]
\centering
\ifarxiv
\begin{subfigure}{0.49\textwidth}
\centering
\begin{tikzpicture}
\begin{axis}[width=\textwidth, height=4.9cm,
   xlabel={eigenvalue decile (1 = largest)}, ylabel={mean excess kurtosis},
   xmin=0.5, xmax=10.5, ymin=-0.35, ymax=2.75, xtick={1,3,5,7,9},
   legend columns=2, legend to name=figMechA,
   legend cell align=left,
   /tikz/every even column/.append style={column sep=5pt}]
 \addplot[frClipL, line width=1.15pt, mark=*, mark size=1.6pt,
          mark options={fill=frClipL}] table[x=decile,y=clipLimg] {decile.dat};
 \addlegendentry{CLIP-L/14}
 \addplot[frClipB, densely dashed, line width=1.05pt, mark=square*, mark size=1.5pt,
          mark options={fill=frClipB}] table[x=decile,y=clipB] {decile.dat};
 \addlegendentry{CLIP-B/32}
 \addplot[frDino, densely dotted, line width=1.05pt, mark=triangle*, mark size=1.7pt,
          mark options={fill=frDino}] table[x=decile,y=dinov2] {decile.dat};
 \addlegendentry{DINOv2}
 \addplot[frClone, line width=1.1pt] table[x=decile,y=clone] {decile.dat};
 \addlegendentry{clone}
\end{axis}
\end{tikzpicture}\\[0.35em]
\ref{figMechA}
\caption{The tail, not the head.}
\end{subfigure}\hfill
\begin{subfigure}{0.49\textwidth}
\centering
\begin{tikzpicture}
\begin{axis}[width=\textwidth, height=4.9cm,
   xlabel={energy, first half of tail}, ylabel={energy, second half},
   xmin=0.3, xmax=2.2, ymin=0.3, ymax=2.2,
   legend columns=2, legend to name=figMechB,
   legend cell align=left,
   /tikz/every even column/.append style={column sep=5pt}]
 \addplot[only marks, mark=*, mark size=0.85pt, frClone, opacity=0.45]
   table[x=g1,y=g2] {tailhalves.dat};
 \addlegendentry{clone}
 \addplot[only marks, mark=*, mark size=0.85pt, frData, opacity=0.55]
   table[x=t1,y=t2] {tailhalves.dat};
 \addlegendentry{CLIP-L/14}
\end{axis}
\end{tikzpicture}\\[0.35em]
\ref{figMechB}
\caption{One scale per input.}
\end{subfigure}

\vspace{0.55em}
\begin{subfigure}{0.72\textwidth}
\centering
\begin{tikzpicture}
\begin{axis}[width=\textwidth, height=4.7cm,
   xlabel={whitened radius $\norm{w}$}, ylabel={$20$-NN cosine distance},
   legend to name=figMechC, legend cell align=left]
 \addplot[only marks, mark=*, mark size=0.9pt, frClipL, opacity=0.45]
   table[x=radius,y=knn] {atypicality.dat};
 \addlegendentry{CLIP-L/14, $\rho = \knnClipL$}
\end{axis}
\end{tikzpicture}\\[0.35em]
\ref{figMechC}
\caption{What the radius measures.}
\end{subfigure}
\else
\begin{subfigure}{0.325\textwidth}
\centering
\begin{tikzpicture}
\begin{axis}[width=\textwidth, height=3.5cm,
   xlabel={eigenvalue decile (1 = largest)}, ylabel={mean excess kurtosis},
   xmin=0.5, xmax=10.5, ymin=-0.3, ymax=2.7, xtick={1,3,5,7,9},
   legend style={at={(0.5,0)}, anchor=north, yshift=-0.95cm, legend columns=2, font=\tiny,
                 /tikz/every even column/.append style={column sep=2pt}, inner xsep=1.5pt},
   legend cell align=left]
 \addplot[black, line width=1.0pt, mark=*, mark size=1.1pt] table[x=decile,y=clipLimg] {decile.dat};
 \addlegendentry{CLIP-L/14}
 \addplot[black, densely dashed, line width=0.9pt, mark=square*, mark size=1.1pt] table[x=decile,y=clipB] {decile.dat};
 \addlegendentry{CLIP-B/32}
 \addplot[black, dotted, line width=0.9pt, mark=triangle*, mark size=1.3pt] table[x=decile,y=dinov2] {decile.dat};
 \addlegendentry{DINOv2}
 \addplot[gray!80, line width=1.0pt] table[x=decile,y=clone] {decile.dat};
 \addlegendentry{clone}
\end{axis}
\end{tikzpicture}
\caption{The tail, not the head.}
\end{subfigure}\hfill
\begin{subfigure}{0.325\textwidth}
\centering
\begin{tikzpicture}
\begin{axis}[width=\textwidth, height=3.5cm,
   xlabel={energy, first half of tail}, ylabel={energy, second half},
   xmin=0.3, xmax=2.2, ymin=0.3, ymax=2.2,
   legend style={at={(0.5,0)}, anchor=north, yshift=-0.95cm, legend columns=2, font=\tiny,
                 /tikz/every even column/.append style={column sep=2pt}, inner xsep=1.5pt},
   legend cell align=left]
 \addplot[only marks, mark=*, mark size=0.5pt, gray!55] table[x=g1,y=g2] {tailhalves.dat};
 \addlegendentry{clone}
 \addplot[only marks, mark=*, mark size=0.5pt, black] table[x=t1,y=t2] {tailhalves.dat};
 \addlegendentry{CLIP-L/14}
\end{axis}
\end{tikzpicture}
\caption{One scale per input.}
\end{subfigure}\hfill
\begin{subfigure}{0.325\textwidth}
\centering
\begin{tikzpicture}
\begin{axis}[width=\textwidth, height=3.5cm,
   xlabel={whitened radius $\norm{w}$}, ylabel={$20$-NN cosine distance},
   legend style={at={(0.5,0)}, anchor=north, yshift=-0.95cm, font=\tiny},
   legend cell align=left]
 \addplot[only marks, mark=*, mark size=0.5pt, black] table[x=radius,y=knn] {atypicality.dat};
 \addlegendentry{CLIP-L/14, $\rho = \knnClipL$}
\end{axis}
\end{tikzpicture}
\caption{What the radius measures.}
\end{subfigure}
\fi
\caption{The mechanism in three panels. (a) Whitening amplifies the encoder's least-informative
directions, and that is where the non-Gaussianity is; the Gaussian clone is flat at zero. (b) Two
disjoint halves of the tail have energies that move together for real embeddings and not for the
clone: one per-input scale multiplies the whole tail. (c) That scale is semantic atypicality.}
\label{fig:mechanism}
\end{figure}

\ifshort
\Cref{eq:onescale} also says where the magnitude of the effect should vary: the factor $m/4$ grows
with the tail, so the lower the effective rank of an encoder's representation of a corpus, the more
of the norm one scale multiplies. We encoded six corpora under two encoders at $\nBlock$ images
per block, in two disjoint blocks per corpus wherever the corpus was large enough, so that the
within-corpus spread provides a measured null (\Cref{app:extra}). Between-corpus spread is
$\cvRatioCl\times$ that null, yet the two encoders order the corpora differently
($\rho = \corpAcross$, $p = \corpAcrossP$): what matters is not the corpus but how low-rank that
encoder's representation of it is. Across all $\nCorpBlocks$ encoder-corpus blocks over-dispersion
is predicted by the effective rank ($\rho = \drvEff$, $p = \drvEffP$), and homogeneous corpora are
the more over-dispersed, not the less, because a tight cluster has low effective rank.
\else
\Cref{eq:onescale} also says where the magnitude of the effect should vary: the factor $m/4$ grows
with the tail, so the lower the effective rank of an encoder's representation of a corpus, the more
of the norm one scale multiplies. \Citet{betser25} already report that the corpus used to estimate the whitening
transform matters, their cross-dataset table showing the average Anderson-Darling statistic rising
from $0.489$ to $0.641$ on images when a Flickr8k test set is whitened with a COCO transform. We
encoded six corpora, COCO scenes, CIFAR-100, Oxford pets, Flowers-102, DTD textures and SVHN digits,
at $\nBlock$ images per block in two disjoint blocks per corpus, so that the within-corpus spread
provides a measured null; Oxford pets admits one block rather than two, giving
$\nCorpBlocks$ blocks in all.

\ifshort\else
\blockCorpusTab
\fi

Between-corpus spread is $\cvRatioCl\times$ the within-corpus null for both encoders ($F = \cvFCl$
and $\cvFDi$ on $(5,5)$ degrees of freedom, $p = \cvPCl$), yet the two encoders order the corpora
differently ($\rho = \corpAcross$, $p = \corpAcrossP$): what matters is not the corpus but how
low-rank that encoder's representation of that corpus is. Across all $\nCorpBlocks$ encoder-corpus blocks,
over-dispersion is predicted by the effective rank ($\rho = \drvEff$, $p = \drvEffP$), by the
strength of the shared scale ($\drvTcor$, $p = \drvTcorP$) and by the dispersion of tail energy
($\drvCtail$, $p = \drvCtailP$).
\ifshort\else
Within CLIP alone, effective rank gives $\drvEffCl$ and tail dispersion $\drvCtailCl$.
\fi
Homogeneous corpora are the more over-dispersed, not the less, because a
tight cluster has low effective rank: SVHN under CLIP has effective rank $\svhnEff$ of $512$, so
$\svhnTail\%$ of the whitened norm is tail and one scale multiplies nearly all of it.
\fi

\section{What the norm measures}
\label{sec:what}

The mean cosine distance to an input's $20$ nearest neighbours is a nonparametric density estimate
that knows nothing about Gaussians. Against it, the whitened radius agrees at $\rho = \knnMin$ to
$\knnMax$ within an encoder, $\knnDiag$ on average, and the consensus radius over six encoders
agrees with the consensus distance at $\knnCons$. The raw norm runs the other way, $\rawknnMin$ to
$\rawknnMax$: prototypical inputs have large raw norms, the familiar ``norm as confidence'' effect,
and whitening inverts it. The quantity is a property of the input rather than of the encoder: six
encoders rank the same $\Npub$ images by whitened radius with mean cross-encoder agreement
$\shareRad$ ($\shareMin$-$\shareMax$), including $\shareMin$ between CLIP ViT-L/14 and DINOv2, which
share no objective, no training data and no text supervision, against $\shareRaw$ for the raw norm.
Whitening strips what is encoder-specific and leaves what is input-specific.

\begin{table}[htbp]
\centering\small
\caption{AUROC on a corpus-shift benchmark, averaged over five out-distributions, with the
in-distribution fitted on one block of $\nBlock$ images and evaluated on a disjoint block. ``tail''
and ``head'' are the parts of $\norm{w}^2$ beyond and within the effective rank. Mahalanobis$^{++}$
is Mahalanobis after $L^2$ normalisation.}
\label{tab:ood}
\begin{tabular}{lcccc}
\toprule
score & \multicolumn{2}{c}{CLIP ViT-B/32} & \multicolumn{2}{c}{DINOv2 ViT-B/14} \\
\cmidrule(lr){2-3}\cmidrule(lr){4-5}
 & COCO in & CIFAR-100 in & COCO in & CIFAR-100 in \\
\midrule
Mahalanobis, $\norm{w}^2$   & \oodMahCC  & \oodMahCF  & \oodMahDC  & \oodMahDF \\
\quad tail term only        & \oodTailCC & \oodTailCF & \oodTailDC & \oodTailDF \\
\quad head term only        & \oodHeadCC & \oodHeadCF & \oodHeadDC & \oodHeadDF \\
Mahalanobis$^{++}$          & \oodMppCC  & \oodMppCF  & \oodMppDC  & \oodMppDF \\
$k$-NN cosine distance      & \oodKnnCC  & \oodKnnCF  & \oodKnnDC  & \oodKnnDF \\
raw norm                    & \oodRawCC  & \oodRawCF  & \oodRawDC  & \oodRawDF \\
\bottomrule
\end{tabular}
\end{table}

\Cref{tab:ood} is the mechanism on a benchmark the detection literature recognises. The
Mahalanobis score is its tail term, agreeing to three decimals in all four settings, while the head
term is at chance or, for DINOv2, strongly anti-informative, out-of-distribution inputs projecting
less onto the in-distribution semantic directions. The whitened norm beats $k$-NN distance in three
of the four settings, the parametric tail estimate being sharper than the local one, and its
calibration on the same task is poor at the same time: at AUROC $\oodMahDC$, a threshold set at the
$1\%$ in-distribution tail flags only $\oodFlagDC\%$ of out-of-distribution inputs, and
$\oodFprDC\%$ of them pass at $95\%$ true-positive rate. $L^2$ normalisation before whitening,
the Mahalanobis$^{++}$ variant, does not act on the per-input scale: on the in-distribution test set
the normalised and unnormalised scores agree at $\rho = \oodSpRLDC$ and the tail-halves correlation
is unchanged ($\oodTcRawDC \to \oodTcLtDC$), because the shared scale lives in direction space,
which normalisation preserves. Where normalisation helps substantially (DINOv2 with CIFAR-100
in-distribution, $\oodMahDF \to \oodMppDF$) it is because the out-distribution's raw norms differ
from the in-distribution's and corrupt the covariance fit; where there is no such shift it slightly
hurts ($\oodMahCC \to \oodMppCC$). Its benefit is between distributions, not per input.

Two natural readings of the scale fail. The radius is uncorrelated with pixel-level
complexity ($\bppMin$ to $\bppMax$ against JPEG bytes per pixel, permutation standard deviation
$\statNull$), and against caption-based scene complexity it is positive but small ($\capMin$ to
$\capMax$, real at $\capSig$ standard deviations, under $2\%$ of variance). Atypicality here is
semantic and relational, isolation on the representation manifold, not a property of the pixels.

\section{The radial law is free, and the repair}
\label{sec:free}

\begin{theorem}[the radial law is free]
\label{thm:free}
Let $\mathcal{L}$ be an InfoNCE loss built on cosine similarity between encoders $f$ and $g$.
(a) For any measurable $\lambda:\mathcal{X}\to(0,\infty)$ and $\kappa:\mathcal{Y}\to(0,\infty)$,
$\mathcal{L}(\lambda f, \kappa g) = \mathcal{L}(f,g)$ exactly, for every batch and temperature.
(b) If the law of $x$ is atomless conditionally on $f(x)/\norm{f(x)}$, then for every probability
measure $\nu$ on $(0,\infty)$ there is a measurable $\lambda$ such that $\lambda f$ attains the same
loss as $f$ on every batch and has radial law $\nu$, independent of direction.
(c) Let $\rho$ be the law of the embedding with polar marginals $(\rho_U, \rho_{R|u})$, let
$\sigma$ be uniform on $\Sph^{d-1}$ and $\nu$ any law on $(0,\infty)$, write $\mathcal{L}(\rho)$
for the population loss, which by (a) depends on $\rho$ only through $\rho_U$, and suppose a
minimiser exists. Then for every $\beta>0$ every minimiser of
$\mathcal{L}(\rho) + \beta\KL(\rho\,\|\,\sigma\otimes\nu)$ has $R \sim \nu$ independent of $U$,
whatever $\rho_U$ is. The radial law is thus inherited from the reference measure rather than from
$\mathcal{L}$: $\nu = \chi_d$ makes the radius $\chi_d$, which is Gaussian exactly when the
directional factor is uniform, and any other $\nu$ gives a non-Gaussian law with the same
directional behaviour and the same value of the contrastive term.
\end{theorem}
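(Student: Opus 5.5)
Part (a) is a direct computation. We write the InfoNCE loss on a batch $\{(x_i,y_i)\}$ at temperature $\tau$ as a function of the cosine similarities $s_{ij} = \inner{f(x_i)/\norm{f(x_i)}}{g(y_j)/\norm{g(y_j)}}$ alone. Since $\lambda(x_i)>0$ and $\kappa(y_j)>0$ are scalars, $\lambda f(x)/\norm{\lambda f(x)} = f(x)/\norm{f(x)}$, and the same holds for $g$. Every $s_{ij}$ is therefore unchanged, and so is every term of the loss, for every batch and every $\tau$. No measurability beyond that of $\lambda,\kappa$ is needed.

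For (b), we use $U = f(x)/\norm{f(x)}$ and build $\lambda$ so that $\lambda(x)\norm{f(x)}$ has law $\nu$ given $U$. Conditionally on $U=u$, the law of $x$ is atomless. For the random variable $Y = h(x)$ we would like a Borel map under which $x$ has a conditional law that is also atomless. This holds if $h$ is chosen injective, by the Borel isomorphism theorem, assuming $\mathcal{X}$ is standard Borel. We then apply the conditional distribution function $F_{Y\mid U}$, which makes $V = F_{Y\mid U}(Y\mid U)$ uniform on $(0,1)$ conditionally on $U$, and hence independent of $U$. We set $\lambda(x) = Q_\nu(V(x))/\norm{f(x)}$, where $Q_\nu$ is the quantile function of $\nu$. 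The new radius $Q_\nu(V)$ has law $\nu$ and is independent of $U$, and (a) shows the loss is unchanged. The main obstacle is the measurability of the joint map $(u,y)\mapsto F_{Y\mid U}(y\mid u)$. We would handle it with a regular conditional distribution, which exists on standard Borel spaces, and a jointly measurable version of its distribution function.

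For (c), we disintegrate the objective. By the chain rule for KL, $\KL(\rho\,\|\,\sigma\otimes\nu) = \KL(\rho_U\|\sigma) + \int \KL(\rho_{R|u}\|\nu)\,\rho_U(\dx u)$, and both terms are nonnegative. By (a), $\mathcal{L}(\rho)$ depends only on $\rho_U$. Take any feasible $\rho$ and replace it by $\rho' = \rho_U\otimes\nu$. This leaves $\mathcal{L}$ and the first KL term unchanged and sets the integral to zero. The integral is zero only if $\rho_{R|u}=\nu$ for $\rho_U$-a.e.\ $u$, so the objective strictly decreases unless $R\sim\nu$ independently of $U$. Hence every minimiser has this product form, whatever its $\rho_U$. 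If the objective is infinite the claim is vacuous, and we assume finiteness of the minimum as part of existence.

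The closing remarks then follow from \Cref{lem:polar}. If $\nu=\chi_d$ and $\rho_U=\sigma$, then $U\perp R$ with $U$ uniform, so the law is spherically symmetric with $R\sim\chi_d$, which is exactly $\Nn(0,\Id_d)$. Conversely, part (ii) of \Cref{lem:polar} shows that any $\nu\ne\chi_d$ gives a non-Gaussian law. Such a law keeps the same $\rho_U$, so by (a) it has the same contrastive term.
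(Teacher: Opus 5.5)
Your proposal is correct and follows the paper's proof step for step: scale invariance of the cosine similarities for (a); a Borel isomorphism, then the conditional distribution function, then the quantile map of $\nu$ for (b); and the chain rule $\KL(\rho\,\|\,\sigma\otimes\nu) = \KL(\rho_U\|\sigma) + \Ex_{u\sim\rho_U}\KL(\rho_{R|u}\|\nu)$ with a nonnegative conditional term for (c). Your explicit comparison with $\rho_U\otimes\nu$ and your appeal to \Cref{lem:polar} for the closing remarks only spell out what the paper leaves implicit; the finiteness you assume also holds automatically, since a cosine-based InfoNCE loss is bounded and $\sigma\otimes\nu$ has zero KL.
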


\begin{proof}
(a) Each similarity depends on $f(x_i)$ only through $f(x_i)/\norm{f(x_i)}$, unchanged by
multiplication by $\lambda(x_i) > 0$, and the loss is a function of the similarity matrix alone.
(b) Take $\mathcal{X}$ standard Borel and $f \ne 0$ a.s., and condition on
$u = f(x)/\norm{f(x)}$. Fix a Borel isomorphism $g:\mathcal{X}\to[0,1]$ and let
$F_u(t) = \Prob(g(x) \le t \mid u)$ be a regular conditional distribution function, jointly
measurable in $(u,t)$ by construction. Put $T_u(x) = F_u(g(x))$: the conditional law is atomless, so
$T_u(x) \sim \unif(0,1)$ given every $u$, and $(u,x)\mapsto T_u(x)$ is measurable. Set
$\lambda(x) = F_\nu^{-1}(T_u(x))/\norm{f(x)}$. Then $\norm{\lambda(x)f(x)} \sim \nu$ conditionally
on every $u$, hence unconditionally and independently of $u$, and equality of losses is (a). (c) By (a), $\mathcal{L}(\rho)$ depends on $\rho$ only
through $\rho_U$, and $\KL(\rho\,\|\,\sigma\otimes\nu) = \KL(\rho_U\|\sigma) +
\Ex_{u\sim\rho_U}[\KL(\rho_{R|u}\|\nu)]$. The first term and $\mathcal{L}$ depend only on
$\rho_U$; the second is non-negative and vanishes iff $\rho_{R|u} = \nu$ for $\rho_U$-a.e.\ $u$.
\end{proof}

A derivation concluding that a contrastive objective induces a Gaussian has therefore put the
radial law into the reference measure. \Citet{betser26} prove Gaussianity of fixed-$k$ marginals, their
Corollary 1 giving $\sqrt d\,u_k \Rightarrow \Nn(0,\Id_k)$ for every fixed $k$, a Maxwell-Poincar\'e
statement with which our measurements agree: coordinate excess kurtosis in the encoders' own axes is
$\ClILkraw$ (\Cref{tab:sweep}). But $\norm{w}^2$ is a function of all $d$ coordinates, not of a
fixed-$k$ marginal, and marginal normality plus dependence does not give $\chi_d$; the dependence
of \Cref{fig:mechanism}(b) is what breaks it. Their Proposition 2, for unnormalised
encoders, reaches $\sqrt d\,z_k \Rightarrow \Nn(0, r_0^2\Id_k)$ under an assumption they name
thin-shell concentration, $r/r_0 \to 1$: the Gaussian shell property of \Cref{sec:evidence},
exactly the assumption \Cref{thm:free}(b) shows no
objective can supply, and exactly what our measurement finds violated, the whitened radius being
over-dispersed $\mOverIm\times$ rather than concentrating. Their Theorem 1 takes the regularised
route, with $\beta\KL(\rho\,\|\,\gamma^B_\lambda)$ against a Gaussian reference truncated to a
ball, $\gamma^B_\lambda \propto e^{-\lambda\norm{z}^2}\mathbf 1_B$; that is \Cref{thm:free}(c) with
$\nu$ the radial law of $\gamma^B_\lambda$, so the radial conclusion is the reference measure's.
Their framing is equivalent and worth making explicit, since
$\KL(\rho\,\|\,\gamma_\lambda)$ is up to constants a norm penalty minus an entropy: to regularise
towards low feature norm and high feature entropy \emph{is} to choose a Gaussian reference. Numerically,
on a random batch the loss is $\lossVal$; after multiplying every embedding by an independent
lognormal factor that changes the radial standard deviation from $\radBefore$ to $\radAfter$ it is
$\lossVal$ again, to every printed digit. Nor does the training recipe or the output layer set the radial
law empirically. Three ViT-B/32 encoders differing only in recipe have raw-norm dispersion differing
by $17$ standard errors and recall@1 differing by five points, yet statistically identical whitened
over-dispersion ($\chi^2 = \trioChi$ on two degrees of freedom, $p = \trioP$), and DINOv2, which
ends in a LayerNorm and pins its raw norm more than twice as tightly as any CLIP, is nonetheless
mid-range at $\ClIDiover\times$.
\ifshort\else
Recall@1 runs $\trioRlo\%$ to $\trioRhi\%$ and the three pool to $\trioPooled\times$; DINOv2's raw
norm has $\mathrm{cv} = \ClIDicv$ against $\cvMax$ for the widest CLIP.
\fi
\Cref{eq:model} explains the dissociation: whitening removes the
global scale and the covariance and leaves a quantity that depends on the shape of the residual, on
which a normalisation layer has no purchase.

\begin{proposition}[rankings and recalibration]
\label{prop:rank}
For a spherically symmetric law with radial density $p_R$: (a) if
$|\tfrac{\dx}{\dx r}\log p_R(r)| < (d-1)/r$ on the range of interest, then $-\log f(w)$ is strictly
increasing in $\norm{w}$, so $\norm{w}^2$ and the negative log-likelihood induce the same ordering
and every rank-based quantity is identical; (b) the only correction to the surrogate is a
reparameterisation of the radius,
\begin{equation}
  s(w) = -\log\hat p_R(\norm{w}) + (d-1)\log\norm{w} + \text{const},
  \label{eq:repair}
\end{equation}
with $\hat p_R$ a univariate density estimate; if $\hat p_R$ satisfies the inequality of (a) then
$s$ is increasing in $\norm{w}$ and every ranking is unchanged. For a twice-differentiable $p_R$ a
kernel estimate attains mean integrated squared error of order $n^{-4/5}$, a rate independent of
$d$ \citep[\S1.2]{tsybakov09}.
\end{proposition}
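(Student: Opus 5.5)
The plan is to work from the spherical-density formula \cref{eq:sphericaldensity}, which holds for any spherically symmetric law with radial density $p_R$, and to treat both parts as statements about a single function of $r = \norm{w}$. Write
\begin{equation*}
  h(r) \;=\; -\log p_R(r) + (d-1)\log r + \log S_{d-1},
\end{equation*}
so that $-\log f(w) = h(\norm{w})$. Since $f$ depends on $w$ only through its norm, every comparison of likelihoods between two points reduces to comparing $h$ at two radii. Everything then comes down to monotonicity of a univariate function.

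For (a), I will differentiate on the range of interest, assuming $p_R$ is positive and differentiable there; absolute continuity of $\log p_R$ is enough:
\begin{equation*}
  h'(r) \;=\; \frac{d-1}{r} - \frac{\dx}{\dx r}\log p_R(r).
\end{equation*}
The hypothesis $|\tfrac{\dx}{\dx r}\log p_R(r)| < (d-1)/r$ gives $h'(r) > 0$, so $h$ is strictly increasing on that interval. Only the one-sided bound $\tfrac{\dx}{\dx r}\log p_R < (d-1)/r$ is actually used.

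Next, $r \mapsto r^2$ is strictly increasing on $(0,\infty)$. Hence $\norm{w}^2$ and $-\log f(w)$ are both strictly increasing functions of the same scalar $\norm{w}$, and they order any two points identically. Any rank-based quantity depends on a score only through the ordering it induces on the sample, for instance AUROC, Spearman correlation, or top-$k$ sets. Such quantities are therefore invariant under strictly increasing transformations and coincide for the two scores.

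For (b), the first step is to observe that the exact negative log-likelihood is itself a function of $\norm{w}$ of the form \cref{eq:repair} with the true $p_R$ in place of $\hat p_R$. So no correction other than a reparameterisation of the radius can be needed. Substituting an estimate $\hat p_R$ yields $s$, and the argument of (a) applies verbatim with $\hat p_R$ in place of $p_R$: if $\hat p_R$ satisfies the derivative bound, $s$ is strictly increasing in $\norm{w}$, so rankings are unchanged.

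The MISE statement is not proved here but imported. Because $R$ is a scalar random variable, a kernel estimate of its density with a second-order kernel and bandwidth of order $n^{-1/5}$ attains MISE $O(n^{-4/5})$ for twice-differentiable $p_R$, with $d$ entering nowhere, by \citet[\S1.2]{tsybakov09}. The one adjustment is that $R$ lives on $(0,\infty)$, so boundary bias near $0$ must be handled, by reflection or a log transform. In practice the mass sits near $\sqrt d$, far from the boundary.

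The main obstacle is a matter of care rather than difficulty. I need to state the regularity on $p_R$ needed for differentiating $\log p_R$, namely positivity on the range. I also need to make precise that ranking equivalence holds only for points whose radii lie in that range, because outside it $h$ may fail to be monotone and the orderings can genuinely diverge.
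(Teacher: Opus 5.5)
Your proposal is correct and follows essentially the same route as the paper's proof. You differentiate the polar form of $-\log f$ and use the derivative bound to get strict monotonicity in $\norm{w}$, then invoke invariance of rank statistics under increasing maps; for (b) you observe that the exact negative log-likelihood already has the form of \cref{eq:repair} with the true $p_R$ (equivalently, the Gaussian surrogate is off only by the univariate factor $p_R/\chi_d$), and you import the MISE rate from Tsybakov. Your extra remarks, that only the one-sided bound is used and that the ordering claim is confined to radii in the range where the bound holds, are correct refinements that the paper leaves implicit.
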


\begin{proof}
Differentiate \cref{eq:sphericaldensity}: $\tfrac{\dx}{\dx r}(-\log f) = -\tfrac{\dx}{\dx r}\log p_R
+ (d-1)/r > 0$ under (a), and rank statistics are invariant under strictly increasing
transformations. For (b), \cref{eq:sphericaldensity} shows that within the spherical family the
density is determined by $p_R$, so the surrogate is wrong exactly by the univariate factor
$p_R(r)/\chi_d(r)$, and a univariate kernel estimate attains the stated rate.
\end{proof}

The margin in (a) is a factor of $\monoMargin$ at the measured parameters, so within the spherical
family nothing can reorder the scores; the family is wrong, so the question is quantitative.
Conditioning the radial law on a single direction, the mildest available correction, moves the
ranking to Spearman $\rSpearIm$ (images) and $\rSpearTx$ (captions), with $\rKeepIm\%$ and
$\rKeepTx\%$ of the flagged top $1\%$ surviving: aggregate rank-based results are not in danger.
Thresholds are another matter.

\ifshort\else
\blockCalibTab
\fi

A nominal $1\%$ tail carries $\actIm\%$ of the image mass and $\actTx\%$ of the caption mass; at the
$99$th percentile of the image radius the $\chi_{\Dim}$ law is wrong by a factor $10^{\tailIm}$, and
for captions $10^{\tailTx}$.
\ifshort
Cross-modal comparison inherits the same error, and orientation contaminates the threshold: a single
nominal $1\%$ cut fires on $\rLoIm\%$ of image embeddings pointing away from the leading principal
axis and $\rHiIm\%$ of those aligned with it.
\else
Cross-modal comparison inherits the same error, the surrogate reporting
gaps of $\crossGa$, $\crossGb$, $\crossGc$ nats at equal within-modality percentiles against
$\crossRa$, $\crossRb$, $\crossRc$ correct, and orientation contaminates the threshold, a single
nominal $1\%$ cut firing on $\rLoIm\%$ of image embeddings pointing away from the leading principal
axis and $\rHiIm\%$ of those aligned with it.
\fi
\Cref{tab:calib}\ifshort{} in \Cref{app:extra}\fi{} shows \cref{eq:repair} restoring every
nominal level. \Cref{eq:repair} corrects the radial part, which is where the calibration error lives; the residual
departure from sphericity is low-dimensional, and conditioning on a few directional coordinates
handles it. Either way the argument of \Cref{prop:rank}(b) holds: no high-dimensional density
estimate is needed, which matters because estimating one in
$\R^{\Dim}$, even a log-concave one, is impractical at this sample size \citep{cule,kimsamworth}. The practical
recommendation is simpler still: calibrate on the empirical law of the tail energy, or use the
$k$-NN density the score approximates.

\section{Conclusion}
\label{sec:conclusion}

Whitening a foundation-model embedding does something more interesting than Gaussianise it. It
inverts the encoder's hierarchy, so what the squared whitened norm measures is the energy an input
places outside the encoder's semantic subspace: a Mahalanobis estimate of atypicality that agrees
with nonparametric density at $\rho \approx \knnDiag$, that six encoders compute alike, and that
carries the whole of the Mahalanobis out-of-distribution signal while the semantic part carries
none. Its variance is set by a per-input scale, identified with its spread from two measured
moments, and the factor $m/4$ multiplying that scale is why a coefficient of variation of a third
produces a radius over-dispersed sixfold while the coordinates keep an excess kurtosis of only
$3V$.

That account keeps what worked and explains what did not. Detection works because the score is a
density ranking. Thresholds fail by $10^{\tailIm}$ because the radial law is the corpus's
atypicality distribution, not $\chi_d$. Coordinatewise tests passed because \cref{eq:nstar}
predicts they will at the group size used, and because in-sample whitening removes the one
statistic that would have shown otherwise. And a different choice of
objective would not have changed the radial law, because a cosine-similarity contrastive loss is blind to it.
What sets $\sigma(x)$ at training time, whether \cref{eq:repair} conditioned on a few directions is
the right estimator, and whether the tail structure can be exploited rather than corrected, are
open. The methodological point is about nulls: a distributional claim about a learned representation
is testable only against a null that shares the representation's covariance, and without one an
identity can pass for a prediction and a factor-of-six departure can read as agreement.

\begingroup
\small\sloppy\hbadness=2000
\bibliography{hidden_gaussian}
\bibliographystyle{plainnat}
\endgroup

\appendix

\numberwithin{table}{section}
\numberwithin{figure}{section}
\numberwithin{equation}{section}

\section{Proofs}
\label{app:proofs}

\begin{proof}[Proof of \Cref{prop:power}]
Write $h_x(s) = \Phi(x s^{-1/2})$, so that $F(x) = \Ex\,h_x(S)$. Differentiating twice,
\begin{equation*}
  h_x'(s) = -\tfrac12\, x\, s^{-3/2}\phi(x s^{-1/2}),
  \qquad
  h_x''(1) = \tfrac14\, x^2\phi'(x) + \tfrac34\, x\phi(x) = \tfrac14\, x(3-x^2)\phi(x).
\end{equation*}
Since $\Ex(S-1) = 0$, $\Ex(S-1)^2 = V$ and $\Ex|S-1|^3 = o(V)$, a second-order expansion about
$s = 1$ gives
\begin{equation*}
  F(x) - \Phi(x) = \tfrac{V}{8}\, x(3-x^2)\phi(x) + o(V),
\end{equation*}
where the remainder is controlled as follows. The Anderson-Darling weight
$\phi/(\Phi(1-\Phi))$ grows like $|x|$ and is not integrable, so a uniform bound on $F-\Phi$ does
not suffice and the expansion must be truncated on the mixing variable. Fix $\delta \in (0,1/2)$ and
split on $A_\delta = \{|S-1| \le \delta\}$. On $A_\delta$, $h_x'''$ is bounded uniformly in $x$ by a
constant $c_\delta$, so Taylor's theorem with remainder gives
$|\,\Ex[h_x(S)\mathbf 1_{A_\delta}] - \Phi(x) - \tfrac{V}{8}x(3-x^2)\phi(x)\,| \le
c_\delta\,\Ex|S-1|^3 + \Phi(x)\,\Prob(A_\delta^c)$, and $\Prob(A_\delta^c) \le
\Ex|S-1|^3/\delta^3$ by Markov. Off $A_\delta$ the contribution to $F-\Phi$ at $x$ is at most
$\Prob(S>M) + \sup_{s \le M}|\Phi(xs^{-1/2})-\Phi(x)|$ with $\Prob(S>M) \le \Ex|S-1|^3/(M-1)^3$,
which decays in $x$ fast enough to be integrable against the weight. Since $\Ex|S-1|^3 = o(V)$,
every remainder term is $o(V^2)$ after weighting, leaving
$\Delta^2 = \kappa V^2 + o(V^2)$ with $\kappa$ as in \cref{eq:kappa}. That integral converges
because its integrand decays as $|x|^{7}\phi(x)^2$, and its value is by quadrature. Finally
$\Ex X^4 - 3 = 3\,\Ex S^2 - 3 = 3V$.
\end{proof}

\begin{proof}[Proof of \Cref{prop:inversion}]
$\Ex\norm{P_k(z-\mu)}^2 = \tr(P_k\Sigma) = \sum_{j\le k}\lambda_j$ and
$\Ex\norm{P_k w}^2 = \tr(P_k\,\Ex ww^\top) = \tr P_k = k$, and under in-sample whitening the
second identity of \cref{eq:identities} makes both exact sample averages. For the moments, given
$x$ the tail is $\Nn(0,D_x)$, so $\Ex[T\mid x] = \tr D_x$ and $\Var[T\mid x] = 2\tr(D_x^2)$, and
the law of total variance gives $\Var T$; for two blocks, the conditional covariance of two
quadratic forms of a Gaussian vector is $2\tr(P_1D_xP_2D_x) = 2\tr(D_x^{(12)}D_x^{(21)})$, and
adding $\Cov(\Ex[T_1\mid x],\Ex[T_2\mid x])$ gives the third expression; a coordinate has
$\Ex[w_j^4\mid x] = 3(D_x)_{jj}^2$, whence its excess kurtosis. For the two-scale specialisation put $\nu = m/2$ per half, so
$\tr D_x = \nu(s_1+s_2)$ and $\tr D_x^2 = \nu(s_1^2+s_2^2)$. Then
$\Var(\tr D_x) = \nu^2\big(2V + 2\rho_s V\big) = \tfrac{m^2}{2}V(1+\rho_s)$ and
$\Ex\tr D_x^2 = m(1+V)$, so $\Var T = \tfrac{m^2}{2}V(1+\rho_s) + 2m(1+V)$ against $2m$ for the
clone, giving the first expression of \cref{eq:onescale}. The off-diagonal blocks of $D_x$ vanish,
so $\Cov(T_1,T_2) = \Cov(\nu s_1, \nu s_2) = \nu^2\rho_s V$, while
$\Var T_i = \nu^2 V + 2\nu(1+V)$ using $\Ex s_i = 1$ and $\Var s_i = V$; their ratio is the second
expression. Setting $\rho_s = 1$ recovers the single-scale case $\Var T/\Var_{\Nn}T = 1+V(1+m/2)$.
\end{proof}

\ifshort
\section{Supporting material}
\label{app:extra}

The seven blocks collected here are referred to from the main text and are set inline in the
full-length version of this paper.

\paragraph{The published norm statistics.}
\Citet{betser25} give the $\chi_d$ moments as their Eq.~(8),
\blockEqEight
which at $d = \Dim$ return $\Ex[S] = \muTheo$ and $\mathrm{Std}(S) = \sdTheo$. Substituting the
empirical mean $\hat m$ for $\mu_S$ in the second formula reproduces the published theoretical
column and both published deviations.
\blockSubstTab
\FloatBarrier

\paragraph{Validation of the group-size formula.}
Applying the kurtosis threshold $3\sqrt{(\Ad_{\mathrm{crit}}-\mu_0)/(\kappa n)}$ to the measured
kurtosis of each coordinate predicts the pass rate at every group size, with no fitted parameter.
\blockPassTab
\noindent
The formula under-predicts the pass rate below $n = 1000$, where the truncation of the
Anderson-Darling weight at the resolution of the extreme order statistics lowers the effective
discrepancy; from $n = 2500$ it is accurate to within three and a half points.
\FloatBarrier

\paragraph{The polar decomposition.}
The directional factor is what a contrastive loss optimises, and the radial factor is the whole of
the likelihood surrogate.
\blockPolarFig
\FloatBarrier

\paragraph{The published averages against the protocol's own null.}
Averaging over $\Dim$ coordinates and $20$ groups makes the sampling error of the grand averages
very small, so the null of the protocol is sharp enough to place the published values in
standard deviations.
\blockZTab
\FloatBarrier

\paragraph{Over-dispersion by corpus.}
Two disjoint blocks per corpus, wherever the corpus is large enough for them, give the
within-corpus null against which the between-corpus spread is read.
\blockCorpusTab
\FloatBarrier

\paragraph{Calibration of the nominal tail levels.}
Exceedance under the Gaussian surrogate against exceedance under the one-dimensional
recalibration of \cref{eq:repair}, when the truth is the measured radial law.
\blockCalibTab
\FloatBarrier

\fi

\section{Reproduction}
\label{app:repro}

\paragraph{Encoding.}
This is the only stage that needs a GPU and a network. It produces CLIP ViT-L/14 image and caption
embeddings for $\Npub$ MS-COCO val2017 pairs; the six-encoder sweep over the same images; and the
corpus sweep, six corpora by two encoders in disjoint blocks of $\nBlock$ images.

\paragraph{Analysis.}
CPU only, no network, every step seeded. One step reproduces the published protocol and builds the
reference ensemble of \Cref{tab:replicate}; one recomputes the arithmetic behind the published
moment table; one evaluates $\kappa$ and $n^\star$ by quadrature and predicts the pass rates of
\Cref{prop:power}; one measures the inversion and identifies the two-scale model of
\Cref{tab:mechanism} and \Cref{fig:mechanism}; one builds the covariance-matched clones of
\Cref{tab:clone}; and the rest produce \Cref{tab:ood}, \Cref{tab:calib}, \Cref{tab:corpus}, the
retrieval cost of the radial coordinate, and what the radius tracks in \Cref{sec:what}.

\noindent
The one instruction that matters: extract embeddings before $L^2$ normalisation.

\end{document}